\def\citep{\cite}
\def\ourname{DualDICE\xspace}
\title{\ourname: Behavior-Agnostic Estimation of Discounted Stationary Distribution Corrections}
\author{%
  \begin{tabular}{c@{\hspace*{.8cm}}c@{\hspace*{.8cm}}c@{\hspace*{.8cm}}c}
  {\bf Ofir Nachum\thanks{Equal contribution.}}
  & {\bf Yinlam Chow$^*$} & {\bf Bo Dai} & {\bf Lihong Li}\\[.1cm]
  \multicolumn{4}{c}{\normalfont Google Research}\\[0.1cm]
  \multicolumn{4}{c}{\normalfont \texttt{\{ofirnachum,yinlamchow,bodai,lihong\}@google.com}}\\[-.3cm]
  \end{tabular}
}
\newcommand{\Bo}[1]{}
\newcommand{\comment}[1]{}
\newtheorem{theorem}{Theorem}
\newtheorem{lemma}[theorem]{Lemma}
\newtheorem{corollary}[theorem]{Corollary}
\newtheorem{assumption}[theorem]{Assumption}
\def\secref#1{section~\ref{#1}}
\def\eqref#1{equation~\ref{#1}}
\def\1{\bm{1}}
\def\eps{{\epsilon}}
\DeclareMathAlphabet{\mathsfit}{\encodingdefault}{\sfdefault}{m}{sl}
\SetMathAlphabet{\mathsfit}{bold}{\encodingdefault}{\sfdefault}{bx}{n}
\def\gD{{\mathcal{D}}}
\def\gF{{\mathcal{F}}}
\def\gG{{\mathcal{G}}}
\def\gH{{\mathcal{H}}}
\def\gN{{\mathcal{N}}}
\def\gZ{{\mathcal{Z}}}
\newcommand{\Acal}{\mathcal{A}}
\newcommand{\Bcal}{\mathcal{B}}
\newcommand{\Ccal}{\mathcal{C}}
\newcommand{\Dcal}{\mathcal{D}}
\newcommand{\Fcal}{\mathcal{F}}
\newcommand{\Gcal}{\mathcal{G}}
\newcommand{\Hcal}{\mathcal{H}}
\newcommand{\Ncal}{\mathcal{N}}
\newcommand{\Ocal}{\mathcal{O}}
\newcommand{\Xcal}{\mathcal{X}}
\newcommand{\Zcal}{\mathcal{Z}}
\newcommand{\Jhat}{\widehat{J}}
\newcommand{\Qhat}{\widehat{Q}}
\def\sP{{\mathbb{P}}}
\def\sR{{\mathbb{R}}}
\def\sV{{\mathbb{V}}}
\newcommand{\E}{\mathbb{E}}
\newcommand{\R}{\mathbb{R}}
\newcommand{\eg}{\emph{e.g.}}
\newcommand{\ie}{\emph{i.e.}}
\newcommand{\iid}{\emph{i.i.d.}}
\newcommand{\rbr}[1]{\left(#1\right)}
\newcommand{\sbr}[1]{\left[#1\right]}
\newcommand{\cbr}[1]{\left\{#1\right\}}
\newcommand{\nbr}[1]{\left\|#1\right\|}
\newcommand{\abr}[1]{\left|#1\right|}
\renewcommand{\url}[1]{{\sffamily #1}}
\DeclareMathOperator*{\argmin}{arg\,min}
\def\mdp{\mathcal{M}}
\def\pitarget{\pi}
\def\pibehavior{\mu}  
\def\visitpi{d^\pi}
\def\visitmu{d^\mu}
\def\visittarget{d^{\pitarget}}
\def\visitrb{d^\Dset}
\DeclareMathOperator{\avgstep}{\rho}
\def\corrmu{w_{\pi/\mu}}
\def\corr{w_{\pi/\Dset}}
\def\bellman{\mathcal{B}}
\def\C{\mathcal{C}}
\def\Sset{S}
\def\Aset{A}
\def\Dset{\mathcal{D}}
\def\defeq{:=}
\renewcommand{\widehat}{\hat}
\begin{document}

\maketitle
\setcounter{footnote}{0}

\begin{abstract}
  In many real-world reinforcement learning applications, access to the environment is limited to a fixed dataset, instead of direct (online) interaction with the environment.  When using this data for either evaluation or training of a new policy, accurate estimates of {\em discounted stationary distribution ratios} --- correction terms which quantify the likelihood that the new policy will experience a certain state-action pair normalized by the probability with which the state-action pair appears in
  the dataset --- can improve accuracy and performance.
  In this work, we propose an algorithm, \ourname, for estimating these quantities.
  In contrast to previous approaches, our algorithm is agnostic to knowledge of the behavior policy (or policies) used to generate the dataset.
  Furthermore, it eschews any direct use of importance weights, thus avoiding potential optimization instabilities endemic of previous methods.
  In addition to providing theoretical guarantees, we present an empirical study of our algorithm applied to off-policy policy evaluation and find that our algorithm significantly improves accuracy compared to existing techniques.\footnote{Find code at \url{https://github.com/google-research/google-research/tree/master/dual\_dice}.}
\end{abstract}


\vspace{-0.1in}
\section{Introduction}
\vspace{-0.1in}
Reinforcement learning (RL) has recently demonstrated a number of successes in various domains, such as games~\cite{mnih2013playing}, robotics~\cite{andrychowicz2018learning}, and conversational systems~\cite{gao19neural,li2016deep}.
These successes have often hinged on the use of simulators to provide large amounts of experience necessary for RL algorithms.
While this is reasonable in game environments, where the game is often a simulator itself, and some simple real-world tasks can be simulated to an accurate enough degree, in general one does not have such direct or easy access to the environment.
%
Furthermore, in many real-world domains such as medicine~\cite{murphy2001marginal}, recommendation~\cite{li2011unbiased}, and education~\cite{mandel2014offline}, the deployment of a new policy, even just for the sake of performance evaluation,  may be expensive and risky.
In these applications, access to the environment is usually in the form of {\em off-policy} data~\cite{sutton1998introduction}, logged experience collected by potentially multiple and possibly unknown
\emph{behavior} policies.

State-of-the-art methods which consider this more realistic setting --- either for policy evaluation or policy improvement --- often rely on estimating ({\em discounted}) {\em stationary distribution ratios} or {\em corrections}.  For each state and action in the environment, these quantities measure the likelihood that one's current {\em target} policy will experience the state-action pair normalized by the probability with which the state-action pair appears in the off-policy data. 
Proper estimation of these ratios can improve the accuracy of policy evaluation~\cite{liu2018breaking} and the stability of policy learning~\cite{gelada2019off,hallak2017consistent,liu19off,sutton2016emphatic}.
In general, these ratios are difficult to compute, let alone estimate, as they rely not only on the probability that the target policy will take the desired action at the relevant state, but also on the probability that the target policy's interactions with the environment dynamics will lead it to the relevant state.

Several methods to estimate these ratios have been proposed recently~\cite{gelada2019off,hallak2017consistent,liu2018breaking}, all based on the steady-state property of stationary distributions of Markov processes~\cite{hastings1970monte}. This property may be expressed locally with respect to state-action-next-state tuples, and is therefore amenable to stochastic optimization algorithms.
However, these methods possess several issues 
when applied 
in practice: 
First, these methods require knowledge of the probability distribution used for each sampled action appearing in the off-policy data.
In practice, these probabilities are usually not known and difficult to estimate, especially in the case of multiple, non-Markovian behavior policies.
Second, the loss functions of these algorithms involve per-step importance ratios (the ratio of action sample probability with respect to the target policy versus the behavior policy).  Depending on how far the behavior policy is from the target policy, these quantities may have large variance, and thus have a detrimental effect on stochastic optimization algorithms.

In this work, we propose {\em Dual stationary DIstribution Correction Estimation} ({\em \ourname}), a new method for estimating discounted stationary distribution ratios.
It is agnostic to the number or type of behavior policies used for collecting the off-policy data.
Moreover, the objective function of our algorithm does not involve any per-step importance ratios, and so our solution is less likely to be affected by their high variance.
We provide theoretical guarantees on the convergence of our algorithm and evaluate it on a number of off-policy policy evaluation benchmarks. We find that \ourname can consistently, and often significantly, improve performance compared to previous algorithms for estimating stationary distribution ratios.

\vspace{-0.1in}
\section{Background}
\vspace{-0.1in}
We consider a Markov Decision Process (MDP) setting~\cite{puterman1994markov}, in which the environment is specified by a tuple $\mdp = \langle \Sset, \Aset, R, T, \beta \rangle$, consisting of a state space, an action space, a reward function, a transition probability function, and an initial state distribution.  A policy $\pi$ interacts with the environment iteratively, starting with an initial state $s_0 \sim \beta$.  At step $t=0,1,\cdots$, the policy produces a distribution $\pi(\cdot|s_t)$ over the actions $\Aset$, from which an action $a_t$ is sampled and applied to the environment.  The environment stochastically produces a scalar reward $r_t \sim R(s_t, a_t)$ and a next state $s_{t+1} \sim T(s_t, a_t)$.  
%
In this work, we consider infinite-horizon environments and the $\gamma$-discounted reward criterion for $\gamma\in[0,1)$.  It is clear that any finite-horizon environment may be interpreted as infinite-horizon by considering an augmented state space with an extra terminal state which continually loops onto itself with zero reward.  

\subsection{Off-Policy Policy Evaluation}

Given a \emph{target} policy $\pi$, we are interested in estimating its value, defined as the normalized expected per-step reward obtained by following the policy:
\begin{equation}
    \avgstep(\pi) \defeq (1-\gamma) \cdot \E\big[\left.\textstyle\sum_{t=0}^\infty \gamma^t r_t ~\right| s_0\sim\beta, \forall t, a_t\sim \pi(s_t), r_t\sim R(s_t,a_t), s_{t+1}\sim T(s_t, a_t)\big]. \label{eqn:avgstep}
\end{equation}
The off-policy policy evaluation (OPE) problem studied here is to estimate $\avgstep(\pi)$ using a fixed set $\Dset$ of transitions $(s, a, r, s')$ sampled in a certain way.
This is a very general scenario: 
$\Dset$ can be collected by a single behavior policy (as in most previous work), multiple behavior policies, or an oracle sampler, 
among others.
In the special case where $\Dset$ contains entire trajectories collected by a known behavior policy $\mu$, one may use {\em importance sampling} (IS) to estimate $\avgstep(\pi)$.
Specifically, given a finite-length trajectory 
$\tau=(s_0,a_0,r_0,\ldots, s_H)$ collected by $\mu$,
the IS estimate of $\avgstep$ based on $\tau$ is estimated by~\cite{precup2000eligibility}:
$
(1-\gamma) \left(\prod_{t=0}^{H-1} \frac{\pi(a_t|s_t)}{\mu(a_t|s_t)}\right) \left(\sum_{t=0}^{H-1} \gamma^t r_t\right).
$
Although many improvements exist~\citep{farajtabar2018more,Jiang16DR,precup2000eligibility,Thomas16DE}, importance-weighting the entire trajectory can suffer from exponentially high variance, which is known as ``the curse of horizon''~\cite{Li15TM,liu2018breaking}.


To avoid exponential dependence on trajectory length, one may weight the states by their {\em long-term} occupancy measure.  First, observe that the policy value may be re-expressed as,
\begin{equation*}
\avgstep(\pi) = \E_{(s,a)\sim\visitpi,r\sim R(s,a)} [r]\,,
\end{equation*}
where
\begin{equation}
\visitpi(s,a) \defeq (1 - \gamma) \textstyle\sum_{t=0}^\infty \gamma^t \Pr\left(s_t=s,a_t=a ~|~ s_0\sim\beta, \forall t, a_t\sim \pi(s_t), s_{t+1}\sim T(s_t, a_t) \right),
\end{equation}
is the {\em normalized discounted stationary distribution} over state-actions with respect to $\pi$.  One may define the discounted stationary distribution over states analogously, and we slightly abuse notation by denoting it as $\visitpi(s)$; note that $\visitpi(s, a)=\visitpi(s)\pi(a|s)$.  If $\Dset$ consists of trajectories collected by a behavior policy $\mu$, 
then the policy value may be estimated as,
\begin{equation*}
   \avgstep(\pi) = \E_{(s,a)\sim\visitmu, r\sim R(s,a)} \left[ \corrmu(s,a) \cdot r \right]\, , 
\end{equation*}
where $\corrmu(s,a)=\frac{\visitpi(s,a)}{\visitmu(s,a)}$ is the {\em discounted stationary distribution correction}.
The key challenge is in estimating these correction terms using data drawn from $\visitmu$.

\subsection{Learning Stationary Distribution Corrections}
We provide a brief summary of previous methods for estimating the stationary distribution corrections.  The ones that are most relevant to our work are a suite of recent techniques \cite{gelada2019off,hallak2017consistent,liu2018breaking}, which are all essentially based on the following steady-state property of stationary Markov processes:
\begin{equation}
    \label{eq:flow}
    \visitpi(s') = (1-\gamma)\beta(s') + \gamma \textstyle\sum_{s\in S} \textstyle\sum_{a\in A} \visitpi(s) \pi(a|s) T(s'|s, a),~~\forall s'\in S,
\end{equation}
where we have simplified the identity by restricting to discrete state and action spaces. This identity simply reflects the conservation of flow of the stationary distribution: At each timestep, the flow out of $s'$ (the LHS) must equal the flow into $s'$ (the RHS).
Given a behavior policy $\mu$, 
\eqref{eq:flow} can be equivalently rewritten in terms of the stationary distribution corrections, i.e., for any given $s'\in\Sset$,
\begin{equation}
    \label{eq:flow-corr}
    ~\E_{(s_t,a_t,s_{t+1})\sim\visitmu} \left[\left. 
    \text{TD}(s_t,a_t,s_{t+1}~|~\corrmu)
    ~\right|~ s_{t+1} = s' \right] = 0\,,
\end{equation}
where
\[
\text{TD}(s,a,s'~|~\corrmu):=-\corrmu(s') + (1-\gamma)\beta(s') + \gamma \corrmu(s) \cdot\frac{\pitarget(a|s)}{\pibehavior(a|s)}\,,
\]
provided that $\mu(a|s)>0$ whenever $\pi(a|s)>0$.  The quantity $\text{TD}$ can be viewed as a {\em temporal difference} associated with $\corrmu$.  Accordingly, previous works optimize loss functions which minimize this TD error using samples from $\visitmu$.  We emphasize that although $\corrmu$ is associated with a temporal difference, it does not satisfy a Bellman recurrence in the usual sense~\cite{bellman}. Indeed, note that~\eqref{eq:flow} is written ``backwards'': The occupancy measure of a state $s'$ is written as a (discounted) function of {\em previous} states, as opposed to vice-versa. 
This will serve as a key differentiator between our algorithm and these previous methods.

\subsection{Off-Policy Estimation with Multiple Unknown Behavior Policies}
While the previous algorithms are promising, they have several limitations when applied in practice:
\begin{itemize}[leftmargin=*,noitemsep,topsep=0pt,parsep=0pt,partopsep=0pt]
    \setlength\itemsep{0pt}
    \item The off-policy experience distribution $\visitmu$ is with respect to a single, Markovian behavior policy $\pibehavior$, and this policy must be known during optimization.\footnote{The Markovian requirement is necessary for TD methods. However, notably, IS methods do not depend on this assumption.}  In practice, off-policy data often comes from multiple, unknown behavior policies.
    \item Computing the TD error in \eqref{eq:flow-corr} requires the use of per-step importance ratios $\pitarget(a_t|s_t)/\pibehavior(a_t|s_t)$ at every state-action sample $(s_t,a_t)$.  Depending on how far the behavior policy is from the target policy, these quantities may have high variance, which can have a detrimental effect on the convergence of any stochastic optimization algorithm that is used to estimate $\corrmu$.
\end{itemize}
The method we derive below will be free of the aforementioned issues, avoiding unnecessary requirements on the form of the off-policy data collection as well as explicit uses of 
importance ratios.
Rather, we consider the general setting where $\Dset$ consists of \emph{transitions} sampled in an unknown fashion.  Since $\Dset$ contains rewards and next states, we will often slightly abuse notation and write not only $(s,a)\sim\visitrb$ but also $(s,a,r)\sim\visitrb$ and $(s,a,s')\sim\visitrb$, where the notation $\visitrb$ emphasizes that, unlike previously, $\Dset$ is not the result of a single, known behavior policy.
The target policy's value can be equivalently written as, 
\begin{equation}
\avgstep(\pitarget) =
\E_{(s,a,r)\sim\visitrb}\left[\corr(s,a)\cdot r\right],
\end{equation}
where the correction terms are given by
$
    \corr(s,a) := \frac{\visittarget(s,a)}{\visitrb(s,a)}
$,
and our algorithm will focus on 
estimating these correction terms. Rather than relying on the assumption that $\Dset$ is the result of a single, known behavior policy, we instead make the following regularity assumption:
\begin{assumption}[Reference distribution property]\label{assumption:ref_property}
  For any $(s,a)$, $\visittarget(s,a)>0$ implies $\visitrb(s,a)>0$.  Furthermore, the correction terms are bounded by some finite constant $C$: $\nbr{w_{\pi/\gD}}_\infty\le C$.
\end{assumption}

\vspace{-0.1in}
\section{\ourname}
\vspace{-0.1in}

We now develop our algorithm, \ourname, 
for estimating the discounted stationary distribution corrections $\corr(s,a)=\frac{\visittarget(s,a)}{\visitrb(s,a)}$.
In the OPE setting, one does not have explicit knowledge of the distribution $\visitrb$, but rather only access to samples $\Dset=\{(s,a,r,s')\}\sim\visitrb$. Similar to the TD methods described above, we also assume access to samples from the initial state distribution $\beta$.
We begin by introducing a key result, which we will later derive and use as the crux for our algorithm. 

\vspace{-0.1in}
\subsection{The Key Idea}
\vspace{-0.08in}


Consider optimizing a (bounded) function $\nu:S\times A\to\R$ for the following objective:
\begin{equation}
    \label{eq:obj-func-nu}
    \min_{\nu:S\times A\to\R} J(\nu) := \frac{1}{2} \E_{(s,a)\sim\visitrb}\left[ \left(\nu - \bellman^{\pitarget}\nu\right)(s,a)^2 \right]
    - (1 - \gamma)~\E_{s_0\sim\beta,a_0\sim\pitarget(s_0)} \left[ \nu(s_0,a_0) \right],
\end{equation}
where we use $\bellman^\pitarget$ to denote the expected Bellman operator with respect to policy $\pitarget$ and zero reward: $\bellman^\pitarget \nu(s,a) = \gamma \E_{s'\sim T(s,a),a'\sim\pitarget(s')}[\nu(s',a')]$.
The first term in~\eqref{eq:obj-func-nu} is the expected squared Bellman error with zero reward.  This term alone would lead to a trivial solution $\nu^* \equiv 0$, which can be avoided by the second term that encourages $\nu^*>0$.
Together, these two terms result in an optimal $\nu^*$ that places some non-zero amount of Bellman residual at state-action pairs sampled from $\visitrb$.
%
%

Perhaps surprisingly, as we will show, the Bellman residuals of $\nu^*$ are exactly the desired distribution corrections:
\begin{equation}
    \label{eq:nu-opt}
    \left(\nu^* - \bellman^{\pitarget}\nu^*\right)(s,a) = \corr(s, a).
\end{equation}
This key result provides the foundation for our algorithm, since it provides us with a simple objective (relying only on samples from $\visitrb$, $\beta$, $\pi$) which we may optimize in order to obtain estimates of the distribution corrections.  In the text below, we will show how we arrive at this result.  We provide one additional step which allows us to efficiently learn a parameterized $\nu$ with respect to~\eqref{eq:obj-func-nu}. We then generalize our results to a family of similar algorithms and lastly present theoretical guarantees. 

\comment{
Consider learning expected Q-values for $\pi$ with respect to a zero reward by minimizing squared Bellman residuals of transitions sampled from $\visitrb$.  That is, consider optimizing the objective,
\vspace{-0.03in}
\begin{equation}
    \label{eq:idea-nu1}
    \min_{\nu:S\times A\to\R} J(\nu) := \frac{1}{2} \E_{(s,a)\sim\visitrb}\left[ \left(\nu - \bellman^{\pitarget}\nu\right)(s,a)^2 \right],
\end{equation}
where $\bellman^\pitarget \nu(s,a) = \gamma \E_{s'\sim T(s,a),a'\sim\pitarget(s')}[\nu(s',a')]$.
This objective has the trivial solution $\nu(s,a):=0$.  To induce a non-trivial solution, we introduce a second term which encourages the initial Q-values (here, $\nu$-values) to be high:
\begin{equation}
    \label{eq:obj-func-nu}
    \min_{\nu:S\times A\to\R} J(\nu) := \frac{1}{2} \E_{(s,a)\sim\visitrb}\left[ \left(\nu - \bellman^{\pitarget}\nu\right)(s,a)^2 \right]
    - (1 - \gamma)~\E_{s_0\sim\beta,a_0\sim\pitarget(s_0)} \left[ \nu(s_0,a_0) \right].
\end{equation}
Now we have two terms which trade-off against each other, and the optimal $\nu^*$ will place some non-zero amount of Bellman residual at each state-action pair sampled from $\visitrb$.  As we will derive below, the Bellman residual of the optimal $\nu^*$ is exactly the desired off-policy correction ratio; i.e.,
\begin{equation}
    \label{eq:nu-opt}
    \left(\nu^* - \bellman^{\pitarget}\nu^*\right)(s,a) = \corr(s, a).
\end{equation}
This result provides the foundation for our algorithm, since it provides us with a simple objective (relying only on samples from $\visitrb$, $\beta$, and $\pi$) which we may optimize in order to obtain estimates of the off-policy corrections.  In the text below, we will show how we arrive at this result.  We provide one additional step which allows us to efficiently learn a parameterized $\nu$ with respect to~\eqref{eq:obj-func-nu}. We then generalize our results to a family of similar algorithms and lastly present theoretical guarantees. 
}

\vspace{-0.08in}
\subsection{Derivation}
\vspace{-0.08in}
\paragraph{A Technical Observation}
We begin our derivation of the algorithm for estimating $\corr$ by presenting the following simple technical observation: For arbitrary scalars $m\in\R_{>0},n\in\R_{\ge0}$, the optimizer of the convex problem
$
    \min_{x} J(x) := \frac{1}{2}m x^2 - nx
$
is unique and given by $x^* = \frac{n}{m}$. 
Using this observation, and letting $\C$ be some bounded subset of $\R$ which contains $[0,C]$, one immediately sees that the optimizer of the following problem,
\begin{equation}
    \label{eq:obj-func-x}
    \min_{x:S\times A \to \C} J_1(x) := \frac{1}{2} \E_{(s,a)\sim\visitrb}\left[ x(s,a)^2 \right] - \E_{(s,a)\sim\visittarget} \left[ x(s,a) \right],
\end{equation}
is given by $x^*(s,a) = \corr(s,a)$ for any $(s,a)\in S\times A$. 
%
This result provides us with an objective that shares the same basic form as~\eqref{eq:obj-func-nu}.  The main distinction is that the second term relies on an expectation over $\visittarget$, which we do not have access to.
\paragraph{Change of Variables}
In order to transform the second expectation in~\eqref{eq:obj-func-x} to be over the initial state distribution $\beta$, we perform the following change of variables: Let $\nu:S\times A\to\R$ be an arbitrary state-action value function that satisfies, 
\begin{equation}
    \nu(s,a) := x(s, a) + \gamma \E_{s'\sim T(s,a),a'\sim\pitarget(s')}[\nu(s',a')],\,\,\forall (s,a)\in S\times A.
\end{equation}
Since $x(s,a)\in\C$ is bounded and $\gamma<1$, the variable $\nu(s,a)$ is well-defined and bounded.
By applying this change of variables, the objective function in~\ref{eq:obj-func-x} can be re-written in terms of $\nu$, and this yields our previously presented objective from~\eqref{eq:obj-func-nu}.  Indeed, define,
$$
\beta_t(s) \defeq \Pr\left(s=s_t ~|~ s_0\sim\beta, a_k\sim \pi(s_k), s_{k+1}\sim T(s_k,a_k) ~~\text{for $0 \le k < t$}\right),
$$
to be the state visitation probability at step $t$ when following $\pi$.
Clearly, $\beta_0=\beta$.  Then,
\begin{eqnarray*}
\lefteqn{\E_{(s,a)\sim\visittarget} \left[ x(s,a) \right] = \E_{(s,a)\sim\visittarget} \left[ \nu(s,a) - \gamma \E_{s'\sim T(s,a),a'\sim\pitarget(s')}[\nu(s',a')] \right]} \\
&=& (1-\gamma) \sum_{t=0}^\infty \gamma^t \E_{s\sim\beta_t, a\sim\pi(s)} \left[ \nu(s,a) - \gamma \E_{s'\sim T(s,a),a'\sim\pitarget(s')}[\nu(s',a')] \right] \\
&=& (1-\gamma) \sum_{t=0}^\infty \gamma^t \E_{s\sim\beta_t, a\sim\pi(s)} \left[ \nu(s,a)\right] - (1-\gamma) \sum_{t=0}^\infty \gamma^{t+1} \E_{s\sim\beta_{t+1}, a\sim\pi(s)} \left[ \nu(s,a)\right] \\
&=& (1-\gamma) \E_{s\sim\beta,a\sim\pitarget(s)} \left[ \nu(s,a) \right]\,.
\end{eqnarray*}

The Bellman residuals of the optimum of this objective give the desired off-policy corrections:
\begin{equation}
    (\nu^* - \bellman^\pitarget\nu^*)(s,a) = x^*(s,a) = \corr(s, a).
\end{equation}
Equation~\ref{eq:obj-func-nu} provides a promising approach for estimating the stationary distribution corrections, since the first expectation is over state-action pairs sampled from $\visitrb$, 
while the second expectation is over $\beta$ 
and actions sampled from $\pitarget$, both of which we have access to.  Therefore, in principle we may solve this problem with respect to a parameterized value function $\nu$, and then use the optimized $\nu^*$ to deduce the corrections.  In practice, however, the objective in its current form presents two difficulties:
\begin{itemize}[leftmargin=*,noitemsep,topsep=0pt,parsep=0pt,partopsep=0pt]
    \item The quantity $(\nu - \bellman^{\pitarget}\nu)(s, a)^2$ involves a conditional expectation inside a square.  
      In general, when environment dynamics are stochastic and the action space may be large or continuous, this quantity may not be readily optimized using standard stochastic techniques. (For example, when the environment is stochastic, its Monte-Carlo sample gradient is generally biased.)
    \item Even if one has computed the optimal value $\nu^*$, the corrections $(\nu^* - \bellman^{\pitarget}\nu^*)(s, a)$, due to the same argument as above, may not be easily computed, especially when the environment is stochastic or the action space continuous.
\end{itemize}

\vspace{-0.08in}
\paragraph{Exploiting Fenchel Duality}
We solve both difficulties listed above in one step by exploiting Fenchel duality~\cite{rockafellar2015convex}: Any convex function $f(x)$ may be written as $f(x) = \max_{\zeta} x\cdot\zeta - f^*(\zeta)$, where $f^*$ is the Fenchel conjugate of $f$.  In the case of $f(x)=\frac{1}{2}x^2$, the Fenchel conjugate is given by $f^*(\zeta) = \frac{1}{2}\zeta^2$.  Thus, we may express our objective as,
\vspace{-0.05in}
\begin{equation*}
    \label{eq:obj-func-zeta}
    \hspace{-0.1in}
    \min_{\nu:S\times A\to\R} J(\nu) := \E_{(s,a)\sim\visitrb}\big[
    \max_{\zeta}~\left(\nu - \bellman^{\pitarget}\nu\right)(s,a)\cdot\zeta - \frac{1}{2}\zeta^2 \big] - (1 - \gamma)~\E_{s_0\sim\beta,a_0\sim\pitarget(s_0)} \left[ \nu(s_0,a_0) \right].
\end{equation*}
%
%
By the interchangeability principle~\cite{dai2016learning,rockafellar2009variational,shapiro2009lectures}, we may replace the inner max over scalar $\zeta$ to a max over functions $\zeta:S\times A\to\R$ and obtain a min-max saddle-point optimization: 
\begin{multline}
    \label{eq:obj-minmax}
    \min_{\nu:S\times A\to\R}\max_{\zeta:S\times A\to\R} J(\nu, \zeta) := \E_{(s,a,s')\sim\visitrb, a'\sim\pitarget(s')}\left[(\nu(s,a) - \gamma \nu(s',a'))\zeta(s,a) - \zeta(s,a)^2 / 2\right] \\ -  (1 - \gamma)~\E_{s_0\sim\beta,a_0\sim\pitarget(s_0)} \left[ \nu(s_0,a_0) \right].
\end{multline}
Using the KKT condition of the inner optimization problem (which is convex and quadratic in $\zeta$), for any $\nu$ the optimal value $\zeta^*_\nu$ is equal to the Bellman residual, $\nu - \bellman^{\pitarget} \nu$. Therefore, the desired stationary distribution correction can then be found from the saddle-point solution $(\nu^*,\zeta^*)$ of the minimax problem in~\eqref{eq:obj-minmax} as follows:
\vspace{-0.03in}
\begin{equation}
    \zeta^*(s, a) = (\nu^* - \bellman^{\pitarget} \nu^*)(s, a) 
    = \corr(s,a).
\end{equation}
Now we finally have an objective which is well-suited for practical computation.  First, unbiased estimates of both the objective and its gradients are easy to compute using stochastic samples from $\visitrb$, $\beta$, and $\pi$, all of which we have access to.  Secondly, notice that the min-max objective function in~\eqref{eq:obj-minmax} is linear in $\nu$ and concave in $\zeta$. Therefore in certain settings, 
one can provide guarantees
on the convergence of optimization algorithms applied to this objective (see Section~\ref{sec:theory}). 
Thirdly, the optimizer of the objective in~\eqref{eq:obj-minmax} immediately gives us the desired stationary distribution corrections through the values of $\zeta^*(s,a)$, with no additional computation.

\vspace{-0.08in}
\subsection{Extension to General Convex Functions}
\label{sec:general}
\vspace{-0.08in}
Besides a quadratic penalty function, one may extend the above derivations to a more general class of convex penalty functions. Consider a generic convex penalty function $f:\R\to\R$. Recall that $\C$ is a bounded subset of $\R$ which contains the interval $[0,C]$ of stationary distribution corrections. If $\C$ is contained in the range of $f'$, then the optimizer of the convex problem,
$
    \min_x J(x) := m\cdot f(x) - n
$
for $\frac{n}{m}\in\C$,
satisfies the following KKT condition: $f'(x^*) = \frac{n}{m}$.
Analogously, the optimizer $x^*$ of,
\begin{equation}
    \label{eq:obj-func-x-general}
    \min_{x:S\times A \to \C} J_1(x) := \E_{(s,a)\sim\visitrb}\left[ f(x(s,a)) \right] - \E_{(s,a)\sim\visittarget} \left[ x(s,a) \right],
\end{equation}
satisfies the equality 
$ 
    f'(x^*(s,a)) = \corr(s,a).
$ 

With change of variables $\nu:=x + \bellman^{\pitarget}\nu$, the above problem becomes,
\begin{equation}
    \label{eq:obj-func-nu-general}
    \min_{\nu:S\times A\to\R} J(\nu) := \E_{(s,a)\sim\visitrb}\left[ f(\left(\nu - \bellman^{\pitarget}\nu\right)(s,a)) \right] - (1 - \gamma)~\E_{s_0\sim\beta,a_0\sim\pitarget(s_0)} \left[ \nu(s_0,a_0) \right].
\end{equation}
Applying Fenchel duality to $f$ in this objective further leads to the following saddle-point problem:
\begin{multline}
    \label{eq:obj-minmax-general}
    \hspace{-0.2in}
    \min_{\nu:S\times A\to\R}\max_{\zeta:S\times A\to\R} J(\nu, \zeta) := \E_{(s,a,s')\sim\visitrb, a'\sim\pitarget(s')}\left[(\nu(s,a) - \gamma \nu(s',a'))\zeta(s,a) - f^*(\zeta(s,a)) \right] \\ -  (1 - \gamma)~\E_{s_0\sim\beta,a_0\sim\pitarget(s_0)} \left[ \nu(s_0,a_0) \right].
\end{multline}
By the KKT condition of the inner optimization problem, for any $\nu$ the optimizer $\zeta^*_\nu$ satisfies,
\begin{equation}
    f^{*\prime}(\zeta^*_\nu(s,a)) = (\nu - \bellman^{\pitarget} \nu)(s,a).
\end{equation}
Therefore, using the fact that the derivative of a convex function $f'$ is the inverse function of the derivative of its Fenchel conjugate $f^{*\prime}$, our desired stationary distribution corrections are found by computing the saddle-point $(\zeta^*,\nu^*)$ of the above problem:
\begin{equation}\label{eq:zeta-to-nu}
    \zeta^*(s, a) = f'((\nu^* - \bellman^{\pitarget} \nu^*)(s, a)) = f'(x^*(s,a)) = \corr(s,a).
\end{equation}
Amazingly, despite the generalization beyond the quadratic penalty function $f(x)=\frac{1}{2}x^2$, the optimization problem in~\eqref{eq:obj-minmax-general} retains all the computational benefits that make this method very practical for learning $\corr(s,a)$: All quantities and their gradients may be unbiasedly estimated from stochastic samples; the objective is linear in $\nu$ and concave in $\zeta$, thus is well-behaved; and the optimizer of this problem immediately provides the desired stationary distribution corrections through the values of $\zeta^*(s,a)$, without any additional computation.

This generalized derivation also provides insight into the initial technical result: It is now clear that the objective in~\eqref{eq:obj-func-x-general} is the negative Fenchel dual (variational) form of the Ali-Silvey or $f$-divergence, which has been used in previous work to estimate divergence and data likelihood ratios~\cite{nguyen2010estimating}. In the case of $f(x)=\frac{1}{2}x^2$ (\eqref{eq:obj-func-x}), this corresponds to a variant of the Pearson $\chi^2$ divergence.  
Despite the similar formulations of our work and previous works using the same divergences to estimate data likelihood ratios~\cite{nguyen2010estimating}, we emphasize that the aforementioned dual form of the $f$-divergence is not immediately applicable to estimation of
off-policy corrections in the context of RL, due to the fact that samples from distribution $\visittarget$ are unobserved. Indeed, our derivations hinged on two additional key steps: (1) the change of variables from $x$ to $\nu:=x + \bellman^{\pitarget} \nu$; and (2) the second application of duality to introduce $\zeta$. Due to these repeated applications of duality in our derivations, we term our method {\em Dual stationary DIstribution Correction Estimation} ({\em \ourname}).

\vspace{-0.07in}
\subsection{Theoretical Guarantees}
\label{sec:theory}
\vspace{-0.07in}
In this section, we consider the theoretical properties of~\ourname in the setting where we have a dataset formed by empirical samples
$\cbr{s_i, a_i, r_i, s'_i}_{i=1}^N\sim \visitrb$, $\cbr{s^i_0}_{i=1}^N\sim \beta$, and target actions $a'_i\sim\pitarget(s'_i),a^i_0\sim\pitarget(s^i_0)$ for $i=1,\dots,N$.\footnote{For the sake of simplicity, we consider the batch learning setting with \emph{i.i.d.} samples as in~\cite{sutton2008dyna}. The results can be easily generalized to single sample path with dependent samples (see Appendix). 
} We will use the shorthand notation $\hat\E_{\visitrb}$ to denote an average over these empirical samples.
Although the proposed estimator can adopt general $f$, for simplicity of exposition we restrict to $f(x) = \frac{1}{2}x^2$.
We consider using an algorithm $OPT$ (\eg, stochastic gradient descent/ascent) to find optimal $\nu,\zeta$ of~\eqref{eq:obj-minmax-general} within some parameterization families $\gF,\gH$, respectively.  We denote by $\hat{\nu},\hat{\zeta}$ the outputs of $OPT$.
%
%
%
We have the following guarantee on the quality of $\hat{\nu},\hat{\zeta}$ with respect to the off-policy policy estimation (OPE) problem.
\begin{theorem}\label{theorem:total-error}
(Informal)
Under some mild assumptions, the mean squared error (MSE) associated with using $\hat{\nu},\hat{\zeta}$ for OPE can be bounded as,
\vspace{-0.06in}
\begin{equation}
\textstyle
\E\sbr{\rbr{\hat\E_{d^\Dcal}\sbr{\hat\zeta\rbr{s, a}\cdot r} - 
\rho(\pi)}^2}=\widetilde\Ocal\rbr{\eps_{approx}\rbr{\Fcal, \Hcal} + \eps_{opt} + \frac{1}{\sqrt{N}}},
\hspace{-0.1in}
\end{equation}
where the outer expectation is with respect to the randomness of the empirical samples and $OPT$, $\epsilon_{opt}$ denotes the optimization error, and $\eps_{approx}\rbr{\Fcal, \Hcal}$ denotes the approximation error due to $\Fcal,\Hcal$.
\end{theorem}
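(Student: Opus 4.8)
The plan is to decompose the overall MSE into three sources of error---approximation, optimization, and statistical (finite-sample)---and bound each separately before recombining by a triangle-type inequality. First I would observe that the target quantity $\rho(\pi)$ can be written as $\E_{(s,a,r)\sim\visitrb}[\corr(s,a)\cdot r] = \E_{(s,a,r)\sim\visitrb}[\zeta^*(s,a)\cdot r]$, where $\zeta^*$ is the exact saddle-point solution of~\eqref{eq:obj-minmax-general} over \emph{all} measurable functions, so that the error we must control is $\hat\E_{\visitrb}[\hat\zeta(s,a)\cdot r] - \E_{\visitrb}[\zeta^*(s,a)\cdot r]$. I would split this through two intermediate objects: the population saddle-point $(\nu^*_{\Fcal,\Hcal},\zeta^*_{\Fcal,\Hcal})$ restricted to the parameterized classes $\Fcal,\Hcal$, and the empirical saddle-point $(\hat\nu^*,\hat\zeta^*)$ which exactly optimizes the \emph{finite-sample} version of~\eqref{eq:obj-minmax-general} over $\Fcal,\Hcal$. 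Then $\hat\zeta$ is what $OPT$ actually returns, which differs from $\hat\zeta^*$ only by the optimization error $\eps_{opt}$.

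The key steps, in order: (1) \textbf{Reduce from distribution-correction error to OPE error.} Show that if $\nbr{\hat\zeta - \zeta^*}_{2,\visitrb}$ (an $L^2(\visitrb)$ norm) is small, then $|\hat\E_{\visitrb}[\hat\zeta\cdot r] - \rho(\pi)|$ is small; this uses Cauchy--Schwarz together with boundedness of rewards, plus a separate $O(1/\sqrt N)$ concentration term to pass from $\hat\E_{\visitrb}$ to $\E_{\visitrb}$ (and to handle that $\hat\zeta$ is itself data-dependent, which requires a uniform deviation bound over $\Hcal$, hence the Rademacher-complexity / covering-number flavor of the $\widetilde\Ocal$). (2) \textbf{Error transfer through the saddle-point structure.} The crux: relate the $L^2(\visitrb)$ distance $\nbr{\hat\zeta - \zeta^*_{\Fcal,\Hcal}}$ to the primal-dual suboptimality gap of $(\hat\nu,\hat\zeta)$ on the empirical objective. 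Because $J(\nu,\zeta)$ is linear in $\nu$ and strongly concave in $\zeta$ (the $-\tfrac12\zeta^2$ term gives strong concavity in the $L^2(\visitrb)$ geometry), the inner maximizer $\zeta^*_\nu$ is unique and the duality gap upper-bounds $\tfrac12\nbr{\hat\zeta - \zeta^*_{\hat\nu}}^2_{2,\visitrb}$; one then argues $\zeta^*_{\hat\nu}$ is close to $\zeta^*_{\Fcal,\Hcal}$ via the linearity in $\nu$ and a bound on how far $\hat\nu$ is from $\nu^*_{\Fcal,\Hcal}$. (3) \textbf{Approximation error.} Define $\eps_{approx}(\Fcal,\Hcal)$ as (essentially) the best achievable $L^2(\visitrb)$-residual when $\nu^*,\zeta^*$ are projected onto $\Fcal,\Hcal$; bound the gap between the unrestricted saddle value and the restricted one in terms of it. (4) \textbf{Statistical error.} Bound $\sup_{\nu\in\Fcal,\zeta\in\Hcal}|J(\nu,\zeta) - \hat J(\nu,\zeta)|$ by $\widetilde\Ocal(1/\sqrt N)$ using a standard uniform-convergence argument (bounded integrands since $\nu,\zeta,\corr$ are all bounded by assumption, finite complexity of $\Fcal,\Hcal$), and convert this objective-gap bound into a solution-gap bound again via strong concavity in $\zeta$. (5) \textbf{Recombine.} Chain the three bounds with a triangle inequality, square, take expectations over the sample and $OPT$'s internal randomness.

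I expect the main obstacle to be step (2): the error transfer through the minimax problem. The objective is only \emph{linear} (not strongly convex) in $\nu$, so a small duality gap does \emph{not} immediately pin down $\hat\nu$ near $\nu^*_{\Fcal,\Hcal}$---what we actually control is the Bellman-residual-type functional $\nu - \bellman^\pitarget\nu$ evaluated under $\visitrb$, and we need an argument (exploiting that $\zeta^*_\nu = \nu - \bellman^\pitarget\nu$ and that \emph{this} map is what feeds into the correction) showing that the quantity we ultimately report, $\hat\zeta(s,a)\cdot r$ averaged under $\visitrb$, is insensitive to the non-identifiability of $\nu$. Concretely, one has to show the bilinear coupling term $\E_{\visitrb}[(\nu - \gamma\nu')\zeta]$ is enough of a ``witness'' that near-optimal $\zeta$ forces near-optimal value of the functional $\E_{\visitrb}[\zeta\cdot r]$, without needing uniqueness of $\nu$. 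A secondary delicate point is handling the conditional expectation $\bellman^\pitarget\nu$ with a single sampled next-state $s'_i$: the empirical objective uses $(\nu(s,a) - \gamma\nu(s',a'))\zeta(s,a)$ rather than the squared residual, so unbiasedness is preserved (this is precisely why the Fenchel step was introduced), but one must be careful that the resulting estimator of $\corr$ is only guaranteed in the $L^2(\visitrb)$ sense and propagate that norm consistently through every step. The remaining pieces (steps 1, 3, 4, 5) are standard statistical-learning bookkeeping once the right norms are fixed.
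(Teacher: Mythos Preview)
Your overall decomposition (approximation / optimization / statistical) and the tools you reach for (strong concavity in $\zeta$, uniform convergence via complexity of $\Fcal,\Hcal$, Cauchy--Schwarz with bounded rewards) match the paper's proof in spirit. The statistical-error and reward-concentration pieces you sketch are essentially what the paper does (covering numbers plus Pollard's inequality in place of Rademacher complexity, but equivalent).

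The substantive difference is in how the obstacle you flag in step~(2) is resolved. You correctly diagnose that linearity in $\nu$ prevents a duality-gap bound from pinning down $\hat\nu$, and you gesture at the Bellman residual $\nu-\bellman^\pi\nu$ being the ``right'' object, but you do not have a concrete mechanism. The paper's mechanism is: abandon the saddle-point gap for this step and work with the \emph{primal} objective $J(\nu)=\max_\zeta J(\nu,\zeta)=\tfrac12\E_{\visitrb}[(\nu-\bellman^\pi\nu)^2]-(1-\gamma)\E_{\beta,\pi}[\nu]$. Viewed as a function of the composite $x:=\nu-\bellman^\pi\nu$, this is $1$-strongly convex in $L^2(\visitrb)$, so $J(\hat\nu^*)-J(\nu^*)\ge\tfrac12\|(\hat\nu^*-\bellman^\pi\hat\nu^*)-\corr\|_{2,\visitrb}^2$ directly---no identifiability of $\nu$ needed. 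The paper then bounds $J(\hat\nu^*)-J(\nu^*)$ by telescoping through $\nu^*_\Fcal$ and the $\Hcal$-restricted objective $L(\nu)=\max_{\zeta\in\Hcal}J(\nu,\zeta)$, which yields the $\eps_{approx}(\Fcal)$, $\eps_{approx}(\Hcal)$, and $\eps_{est}$ terms.

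A second, related difference: the paper does \emph{not} define $\eps_{opt}$ as a primal--dual gap. It defines it directly in solution space as $\|\hat\zeta-\hat\zeta^*\|_{2,\hat\Dcal}^2+\|(\hat\nu^*-\hat\bellman^\pi\hat\nu^*)-(\hat\nu-\hat\bellman^\pi\hat\nu)\|_{2,\hat\Dcal}^2$, i.e.\ distances of the \emph{Bellman residuals} and of $\zeta$ to their empirical optima. This definition makes the chain from $\hat\zeta$ to $(\hat\nu-\hat\bellman^\pi\hat\nu)$ to $(\hat\nu^*-\hat\bellman^\pi\hat\nu^*)$ to $\corr$ a pure triangle inequality, and pushes the gap-to-distance conversion (where it is actually needed) into the separate optimization analysis for specific algorithms (SGD/SVRG under linear parametrization). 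Your plan to extract $\|\hat\zeta-\zeta^*_{\hat\nu}\|$ from the duality gap and then argue $\zeta^*_{\hat\nu}\approx\zeta^*_{\Fcal,\Hcal}$ via closeness of $\hat\nu$ would, as you suspected, get stuck; reparametrizing to Bellman residuals is the missing move.
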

\vspace{-0.03in}
The sources of estimation error are explicit in~Theorem~\ref{theorem:total-error}.  As the number of samples $N$ increases, the statistical error $N^{-1/2}$ approaches zero. Meanwhile, there is an implicit trade-off in $\eps_{approx}\rbr{\Fcal,\Hcal}$ and $\eps_{opt}$. With flexible function spaces $\gF$ and $\gH$ (such as the space of neural networks), the approximation error can be further decreased; however, optimization will be complicated and it is difficult to characterize $\eps_{opt}$. On the other hand, with linear parameterization of $\rbr{\nu, \zeta}$, under some mild conditions, after $T$ iterations we achieve provably fast rate, $\Ocal\rbr{\exp\rbr{-T}}$ for $OPT=\text{SVRG}$ and $\Ocal\rbr{\frac{1}{T}}$ for $OPT=\text{SGD}$, at the cost of potentially increased approximation error.
See the Appendix 
for the precise theoretical results, proofs, and further discussions.

\vspace{-0.1in}
\section{Related Work}
\vspace{-0.1in}
\begin{paragraph}{Density Ratio Estimation}
Density ratio estimation is an important tool for many machine learning and statistics problems. Other than the naive approach, (i.e., the density ratio is calculated via estimating the densities in the numerator and denominator separately, which may magnify the estimation error), various direct ratio estimators have been proposed~\cite{sugiyama2012density}, including the moment matching approach~\cite{gretton2009covariate}, probabilistic classification approach~\cite{bickel2007discriminative,cheng2004semiparametric,qin1998inferences}, and ratio matching approach~\cite{kanamori2009least,nguyen2010estimating,sugiyama2008direct}

The proposed \ourname algorithm, as a direct approach for density ratio estimation, bears some similarities to ratio matching~\cite{nguyen2010estimating}, which is also derived by exploiting the Fenchel dual representation of the $f$-divergences. However, compared to the existing direct estimators, the major difference lies in the requirement of the samples from the stationary distribution. Specifically, the existing estimators require access to samples from both $\visitrb$ and $\visittarget$, which is impractical in the off-policy learning setting. Therefore, \ourname is uniquely applicable to the more difficult RL setting.
\end{paragraph}

\begin{paragraph}{Off-policy Policy Evaluation}
The problem of off-policy policy evaluation has been heavily studied in contextual bandits~\cite{dudik2011doubly,Swaminathan17OP,wang2017optimal} and in the more general RL setting~\cite{fonteneau13batch,Jiang16DR,Li15TM,Mahmood14WI,Paduraru13OP,Precup01OP,Precup00ET,Thomas16DE,Thomas15HCPE}.
Several representative approaches can be identified in the literature.  
The Direct Method (DM) learns a model of the system and then uses it to estimate the performance of the evaluation policy. This approach often has low variance but its bias depends on how well the selected function class can express the environment dynamics. Importance sampling (IS)~\cite{precup2000eligibility} uses importance weights to correct the mismatch between the distributions of the system trajectory induced by the target and behavior policies. Its variance can be unbounded when there is a big difference between the distributions of the evaluation and behavior policies, and grows exponentially with the horizon of the RL problem. Doubly Robust (DR) is a combination of DM and IS, and can achieve the low variance of DM and no (or low) bias of IS. 
Other than DM, all the methods described above require knowledge of the policy density ratio, and thus the behavior policy.  Our proposed algorithm avoids this necessity.

\end{paragraph}

\vspace{-0.1in}
\section{Experiments}
\vspace{-0.1in}

We evaluate our method applied to off-policy policy evaluation (OPE).  We focus on this setting because it 
is a direct application of stationary distribution correction estimation, without many additional tunable parameters, and it has been previously used as a test-bed for similar techniques~\cite{liu2018breaking}.
In each experiment, we use a behavior policy $\pibehavior$ to collect some number of trajectories, each for some number of steps.  This data is used to estimate the stationary distribution corrections, which are then used to estimate the average step reward, with respect to a target policy $\pitarget$. 
We focus our comparisons here to a TD-based approach (based on~\cite{gelada2019off}) and weighted step-wise IS (as described in~\cite{liu2018breaking}), which we and others have generally found to work best relative to common IS variants~\cite{mandel2014offline,precup2000eligibility}.
See the Appendix 
for additional results and implementation details.

We begin in a controlled setting with an evaluation agnostic to optimization issues, where we find that, absent these issues, our method is competitive with TD-based approaches (Figure~\ref{fig:taxi}).
However, as we move to more difficult settings with complex environment dynamics, the performance of TD methods degrades dramatically, while our method is still able to provide accurate estimates (Figure~\ref{fig:control}).
Finally, we provide an analysis of the optimization behavior of our method on a simple control task across different choices of function $f$ (Figure~\ref{fig:grid}).  Interestingly, although the choice of $f(x)=\frac{1}{2}x^2$ is most natural, we find the empirically best performing choice to be $f(x)=\frac{2}{3}|x|^{3/2}$.
All results are summarized for 20 random seeds, with median plotted and error bars at $25^{\text{th}}$ and $75^{\text{th}}$ percentiles.\footnote{The choice of plotting percentiles is somewhat arbitrary. Plotting mean and standard errors yields similar plots.}

\vspace{-0.1in}
\subsection{Estimation Without Function Approximation}
\vspace{-0.1in}
\begin{figure}[h]
  \setlength{\tabcolsep}{0pt}
  \renewcommand{\arraystretch}{0.7}
  \begin{center}
  	\begin{tabular}{lcccc}
  	&
  	\small $\text{\# traj}=50$ &
  	\small $\text{\# traj}=100$ &
  	\small $\text{\# traj}=200$ &
  	\small $\text{\# traj}=400$ \\
  	\rotatebox[origin=c]{90}{\small log RMSE \hspace{-0.75in}} &
    \includegraphics[width=0.2\textwidth]{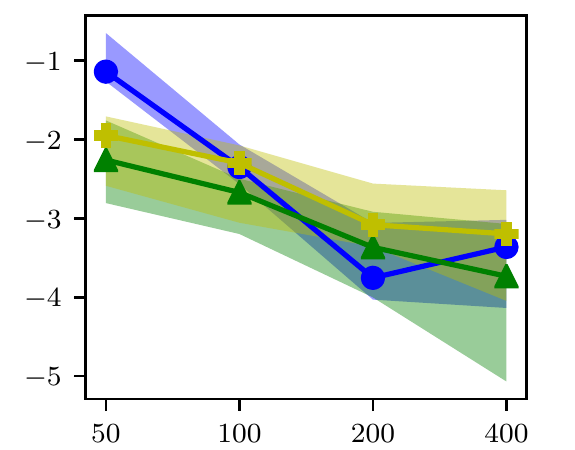} &
    \includegraphics[width=0.2\textwidth]{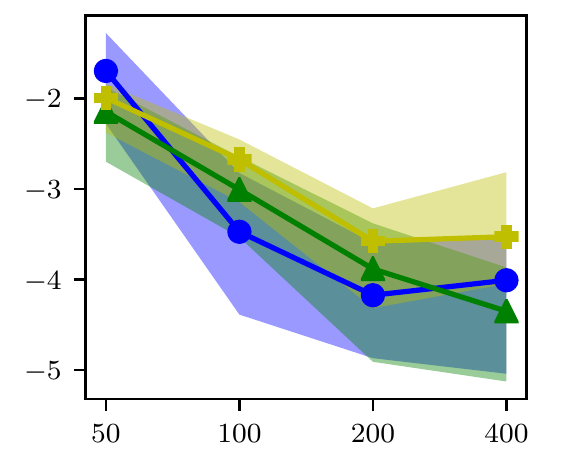} &
    \includegraphics[width=0.2\textwidth]{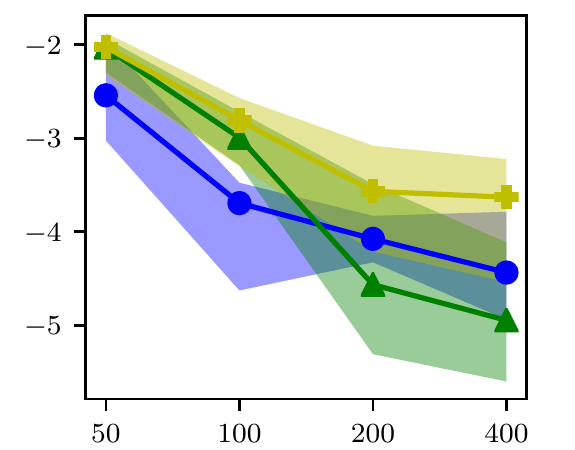} &
    \includegraphics[width=0.2\textwidth]{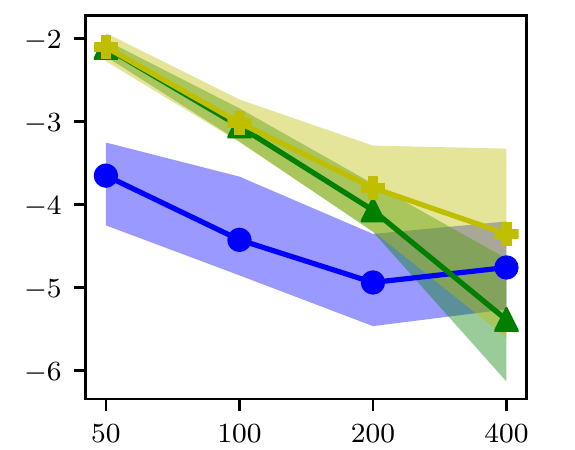} \\
    & \tiny trajectory length & & & \\
    \multicolumn{5}{c}{\includegraphics[width=0.55\columnwidth]{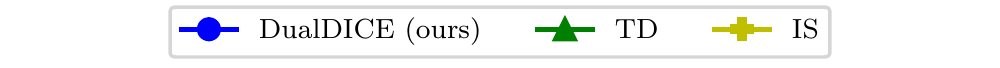}}
    \end{tabular}
  \end{center}
  \vspace{-0.15in}
  \caption{We perform OPE on the Taxi domain~\cite{dietterich2000hierarchical}. The plots show log RMSE of the estimator across different numbers of trajectories and different trajectory lengths ($x$-axis).  For this domain, we avoid any potential issues in optimization by solving for the optimum of the objectives exactly using standard matrix operations.  Thus, we are able to see that our method and the TD method are competitive with each other.  
  }
  \label{fig:taxi}
\end{figure}

We begin with a tabular task, the Taxi domain~\cite{dietterich2000hierarchical}.
In this task, we evaluate our method in a manner agnostic to optimization difficulties: The objective~\ref{eq:obj-func-nu} is a quadratic equation in $\nu$, and thus may be solved by matrix operations.  The Bellman residuals (\eqref{eq:nu-opt}) may then be estimated via an empirical average of the transitions appearing in the off-policy data.
In a similar manner, TD methods for estimating the correction terms may also be solved using matrix operations~\cite{liu2018breaking}.
In this controlled setting, we find that, as expected, TD methods can perform well (Figure~\ref{fig:taxi}), and our method achieves competitive performance.  As we will see in the following results, the good performance of TD methods quickly deteriorates as one moves to more complex settings, while our method is able to maintain good performance, even when using function approximation and stochastic optimization.

\vspace{-0.1in}
\subsection{Control Tasks}
\vspace{-0.1in}

\begin{figure}[h]
  \setlength{\tabcolsep}{0pt}
  \renewcommand{\arraystretch}{0.7}
  \begin{center}
    \hspace{-0.25in}
  	\begin{tabular}{lccc}
  	&
  	\tiny Cartpole, $\alpha=0$ &
  	\tiny Cartpole, $\alpha=0.33$ &
  	\tiny Cartpole, $\alpha=0.66$
  	\\
  	\rotatebox[origin=c]{90}{\tiny $\hat{\avgstep}(\pitarget)$ \hspace{-0.8in}} &
    \includegraphics[width=0.22\textwidth]{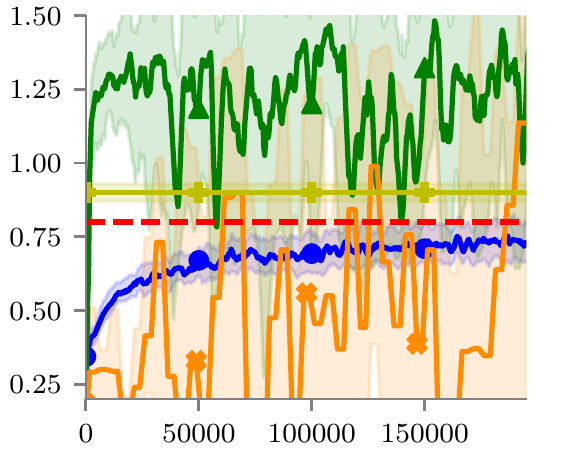} &
    \includegraphics[width=0.22\textwidth]{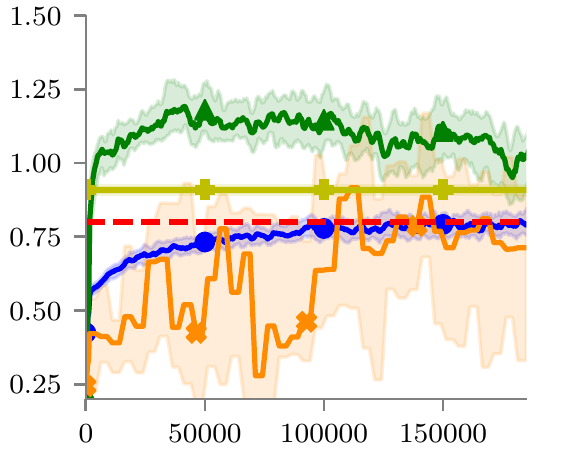} &
    \includegraphics[width=0.22\textwidth]{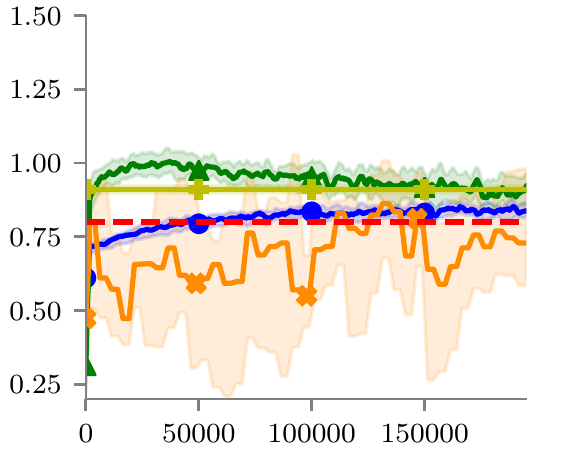} \\
    &
  	\tiny Reacher, $\alpha=0$ &
  	\tiny Reacher, $\alpha=0.33$ &
  	\tiny Reacher, $\alpha=0.66$
  	\\
  	\rotatebox[origin=c]{90}{\tiny $\hat{\avgstep}(\pitarget)$ \hspace{-0.8in}} &
    \includegraphics[width=0.24\textwidth]{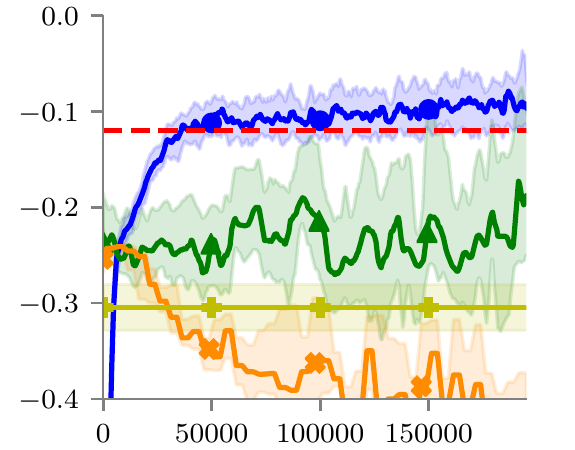} &
    \includegraphics[width=0.24\textwidth]{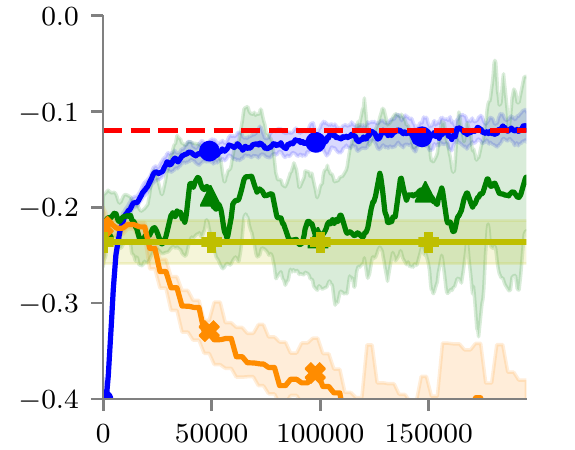} &
    \includegraphics[width=0.24\textwidth]{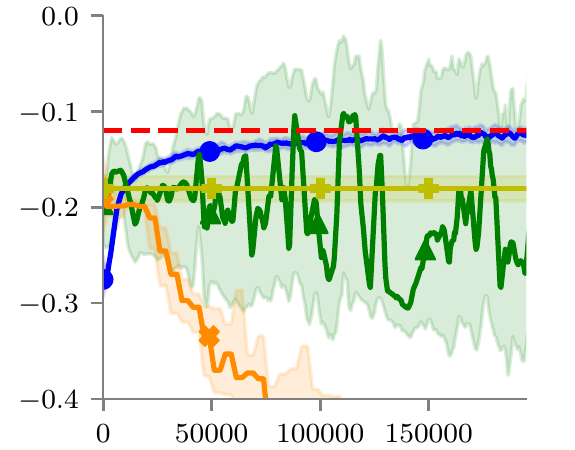}
    \end{tabular}
    \includegraphics[width=0.8\columnwidth]{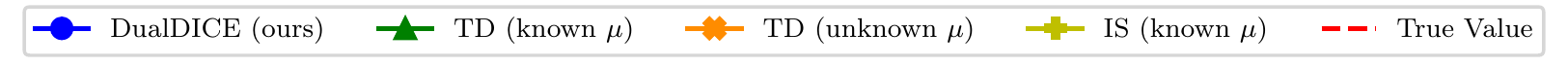}
  \end{center}
  \vspace{-0.15in}
  \caption{We perform OPE on control tasks.  Each plot shows the estimated average step reward over training (x-axis is training step) and different behavior policies (higher $\alpha$ corresponds to a behavior policy closer to the target policy).  We find that in all cases, our method is able to approximate these desired values well, with accuracy improving with a larger $\alpha$.  On the other hand, the TD method performs poorly, even more so when the behavior policy $\mu$ is unknown and must be estimated. While on Cartpole it can start to approach the desired value for large $\alpha$, on the more complicated Reacher task (which involves continuous actions) its learning is too unstable to learn anything at all.
  \vspace{-0.125in}
  }
  \label{fig:control}
\end{figure}

We now move on to difficult control tasks: A discrete-control task Cartpole and a continuous-control task Reacher~\cite{brockman2016openai}.
In these tasks, observations are continuous, and thus we use neural network function approximators with stochastic optimization.
Figure~\ref{fig:control} shows the results of our method compared to the TD method. We find that in this setting, \ourname is able to provide good, stable performance, while the TD approach suffers from high variance, and this issue is exacerbated when we attempt to estimate $\mu$ rather than assume it as given.  See the Appendix for additional baseline results.

\vspace{-0.1in}
\subsection{Choice of Convex Function $f$}
\vspace{-0.1in}

\begin{figure}[h]
  \setlength{\tabcolsep}{0pt}
  \renewcommand{\arraystretch}{0.7}
  \begin{center}
  	\begin{tabular}{rcccc}
  	&
  	\small $\text{traj length}=50$ &
  	\small $\text{traj length}=100$ &
  	\small $\text{traj length}=200$ &
  	\small $\text{traj length}=400$ \\
    \rotatebox[origin=c]{90}{\small log RMSE \hspace{-0.75in}} &
    \includegraphics[width=0.2\textwidth]{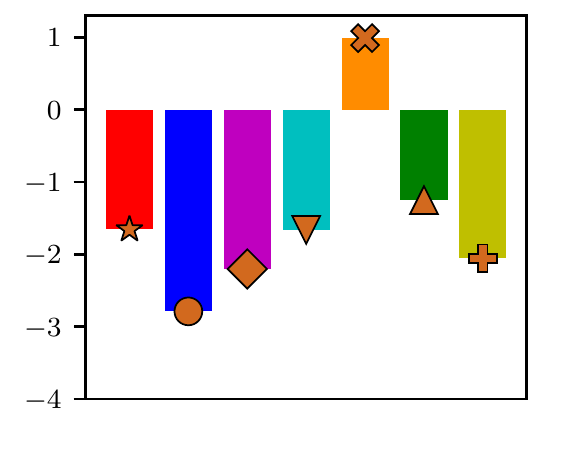} &
    \includegraphics[width=0.2\textwidth]{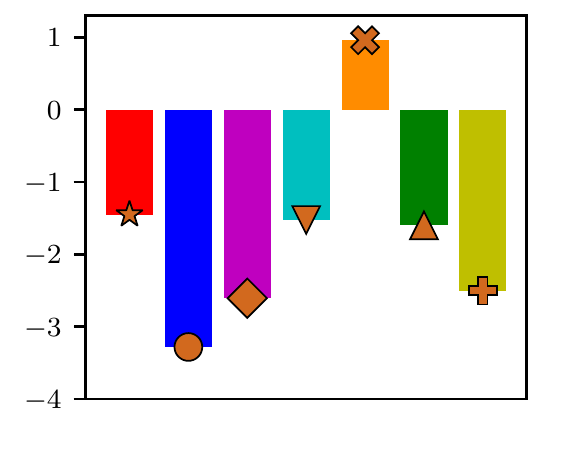} &
    \includegraphics[width=0.2\textwidth]{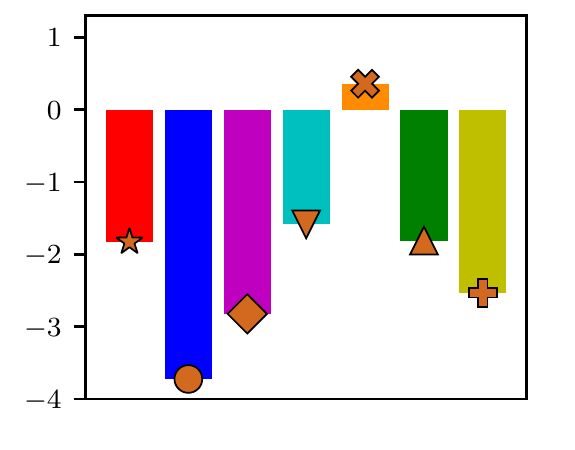} &
    \includegraphics[width=0.2\textwidth]{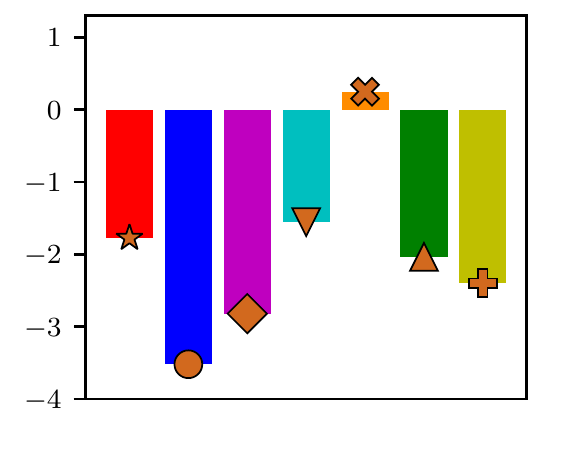} \\
    \multicolumn{5}{c}{\includegraphics[width=0.7\columnwidth]{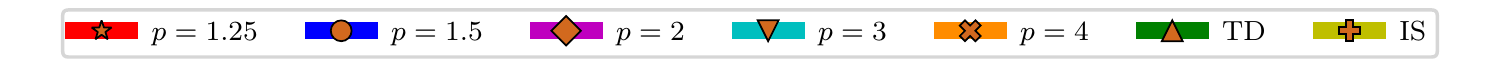}}
    \end{tabular}
  \end{center}
  \vspace{-0.15in}
  \caption{We compare the OPE error when using different forms of $f$ to estimate stationary distribution ratios with function approximation, which are then applied to OPE on a simple continuous grid task.  In this setting, optimization stability is crucial, and this heavily depends on the form of the convex function $f$.  We plot the results of using $f(x)=\frac{1}{p}|x|^p$ for $p\in[1.25,1.5,2,3,4]$. We also show the results of TD and IS methods on this task for comparison. We find that $p=1.5$ consistently performs the best, often providing significantly better results.
  \vspace{-0.125in}
  }
  \label{fig:grid}
\end{figure}

We analyze the choice of the convex function $f$.  We consider a simple continuous grid task where an agent may move left, right, up, or down and is rewarded for reaching the bottom right corner of a square room.  We plot the estimation errors of using \ourname for off-policy policy evaluation on this task, comparing against different choices of convex functions of the form $f(x)=\frac{1}{p} |x|^{p}$.
Interestingly, although the choice of $f(x)=\frac{1}{2}x^2$ is most natural, we find the empirically best performing choice to be $f(x)=\frac{2}{3}|x|^{3/2}$.
Thus, this is the form of $f$ we used in our experiments for Figure~\ref{fig:control}.

\vspace{-0.1in}
\section{Conclusions}
\vspace{-0.1in}

We have presented \ourname, a method for estimating off-policy stationary distribution corrections.  Compared to previous work, our method is agnostic to knowledge of the behavior policy used to collect the off-policy data and avoids the use of importance weights in its losses.  These advantages have a profound empirical effect: our method provides significantly better estimates compared to TD methods, especially in settings which require function approximation and stochastic optimization.

Future work includes (1) incorporating the DualDICE algorithm into off-policy training, (2) further understanding the effects of $f$ on the performance of DualDICE (in terms of approximation error of the distribution corrections), and (3) evaluating DualDICE on real-world off-policy evaluation tasks.

\subsubsection*{Acknowledgments}
We thank Marc Bellemare, Carles Gelada, and the rest of the Google Brain team for helpful thoughts and discussions.

\bibliography{references}
\bibliographystyle{plain}

\newpage
\appendix
\section{Pseudocode}

\begin{algorithm}[H]
   \caption{DualDICE}
\begin{algorithmic}
   \STATE {\bf Inputs}: Convex function $f$ and its Fenchel conjugate $f^*$, off-policy data $\hat\Dset = \{(s^{(i)}, a^{(i)}, r^{(i)}, s^{\prime(i)})\}_{i=1}^N$, sampled initial states $\hat{\beta}=\{s_0^{(i)}\}_{i=1}^M$, target policy $\pitarget$, networks $\nu_{\theta_1}(\cdot,\cdot),\zeta_{\theta_2}(\cdot,\cdot)$, learning rates $\eta_\nu,\eta_\zeta$, number of iterations $T$,
   batch size $B$.
   \FOR{$t = 1,\dots,T$}
   \STATE Sample batch $\{(s^{(i)}, a^{(i)}, r^{(i)}, s^{\prime(i)})\}_{i=1}^B$ from $\hat\Dset$.
   \STATE Sample batch $\{s_0^{(i)}\}_{i=1}^B$ from $\hat\beta$.
   \STATE Sample actions $a^{\prime(i)}\sim\pi(s^{\prime(i)})$, for $i=1,\dots,B$.
   \STATE Sample actions $a_0^{(i)}\sim\pi(s_0^{(i)})$, for $i=1,\dots,B$.
   \STATE Compute empirical loss $\hat J = \frac{1}{B}\sum_{i=1}^B (\nu_{\theta_1}(s^{(i)}, a^{(i)}) - \gamma\nu_{\theta_1}(s^{\prime(i)}, a^{\prime(i)}))\zeta_{\theta_2}(s^{(i)},a^{(i)}) - f^*(\zeta_{\theta_2}(s^{(i)},a^{(i)})) - (1-\gamma)\nu_{\theta_1}(s_0^{(i)}, a_0^{(i)})$.
   \STATE Update $\theta_1\leftarrow \theta_1 - \eta_\nu \nabla_{\theta_1}\hat J$.
   \STATE Update $\theta_2\leftarrow \theta_2 + \eta_\zeta \nabla_{\theta_2}\hat J$.
   \ENDFOR 
   \STATE {\bf Return} $\zeta_{\theta_2}(\cdot,\cdot)$.
\end{algorithmic}
\end{algorithm}
\section{Additional Results}\label{appendix:more_results}

\begin{figure}[h]
  \setlength{\tabcolsep}{0pt}
  \renewcommand{\arraystretch}{0.7}
  \begin{center}
    \hspace{-0.25in}
  	\begin{tabular}{lcccccc}
  	&
  	\tiny Cartpole, $\alpha=0$ &
  	\tiny Cartpole, $\alpha=0.33$ &
  	\tiny Cartpole, $\alpha=0.66$ &
  	\tiny Reacher, $\alpha=0$ &
  	\tiny Reacher, $\alpha=0.33$ &
  	\tiny Reacher, $\alpha=0.66$ 
  	\\
  	\rotatebox[origin=c]{90}{\tiny $\hat{\avgstep}(\pitarget)$ \hspace{-0.8in}} &
    \includegraphics[width=0.17\textwidth]{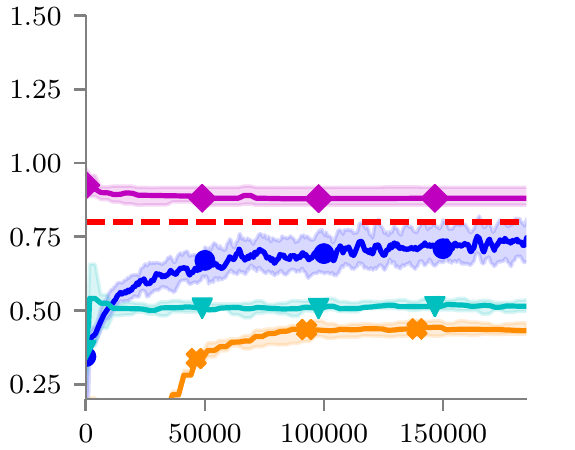} &
    \includegraphics[width=0.17\textwidth]{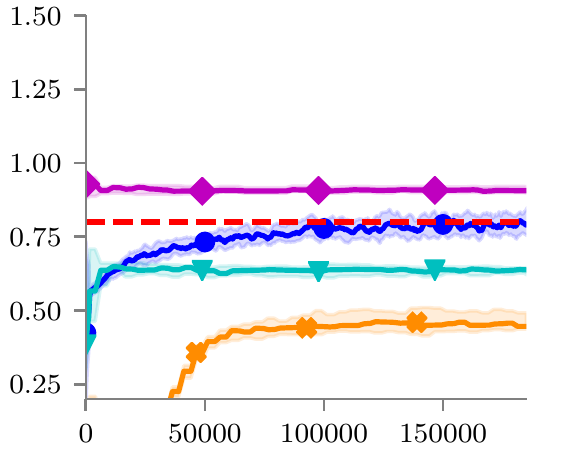} &
    \includegraphics[width=0.17\textwidth]{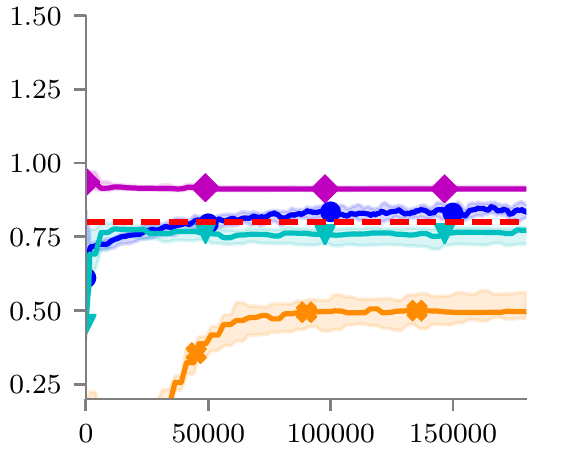} &
    \includegraphics[width=0.17\textwidth]{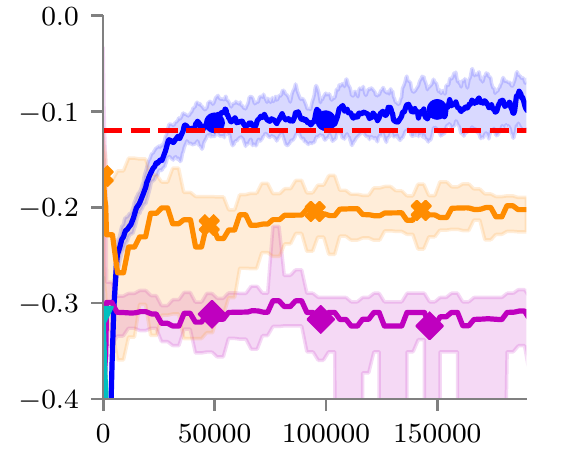} &
    \includegraphics[width=0.17\textwidth]{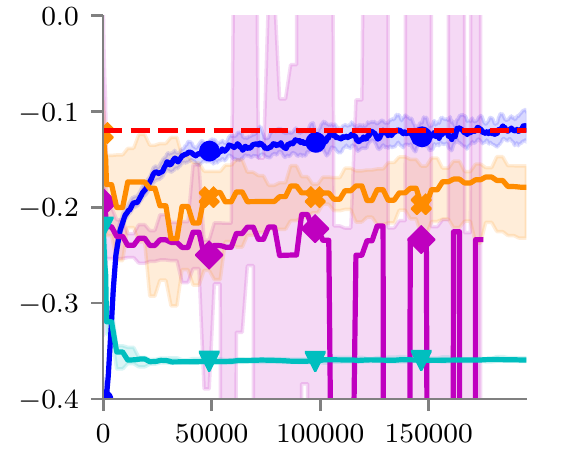} &
    \includegraphics[width=0.17\textwidth]{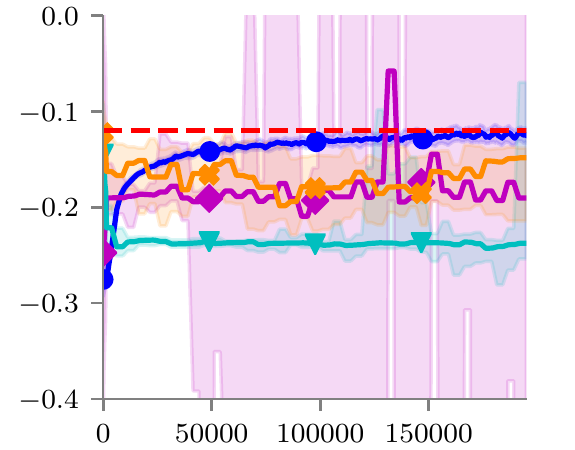} \\
    \multicolumn{7}{c}{\includegraphics[width=0.8\columnwidth]{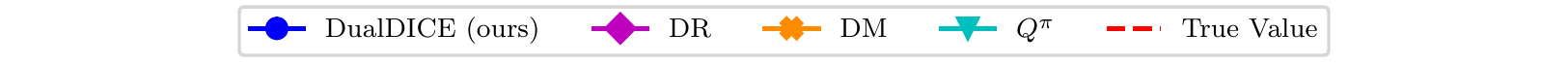}}
    \end{tabular}
  \end{center}
  \caption{We perform OPE on control tasks (x-axis is training step) using our method compared to a number of additional baselines: doubly-robust (DR), in which one learns a value function in order to reduce the variance of an IS estimate of the evaluation; direct method (DM), in which one learns a model of the dynamics and reward of the environment and performs Monte Carlo rollouts using the model in order to estimate the value of the target policy; and $Q^\pi$, in which one learns $Q^\pi$ values via Bellman error minimization over the off-policy data, and uses the initial values $(1-\gamma)\cdot Q^\pi(s_0,a_0)$ as estimates of the policy value (these estimates are below $-0.4$ for Reacher, $\alpha=0$).
  }
  \label{fig:control2}
\end{figure}

\begin{figure}[h]
  \setlength{\tabcolsep}{0pt}
  \renewcommand{\arraystretch}{0.7}
  \begin{center}
    \hspace{-0.25in}
  	\begin{tabular}{lcccccc}
  	&
  	\tiny Acrobot, $\alpha=0$ &
  	\tiny Acrobot, $\alpha=0.33$ &
  	\tiny Acrobot, $\alpha=0.66$ &
  	\tiny Pendulum, $\alpha=0$ &
  	\tiny Pendulum, $\alpha=0.33$ &
  	\tiny Pendulum, $\alpha=0.66$ 
  	\\
  	\rotatebox[origin=c]{90}{\tiny $\hat{\avgstep}(\pitarget)$ \hspace{-0.8in}} &
    \includegraphics[width=0.17\textwidth]{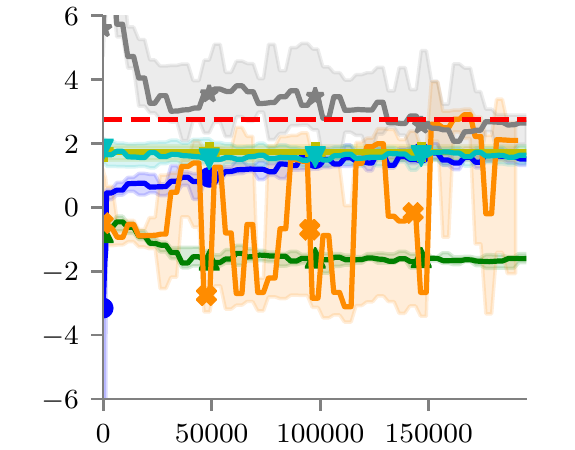} &
    \includegraphics[width=0.17\textwidth]{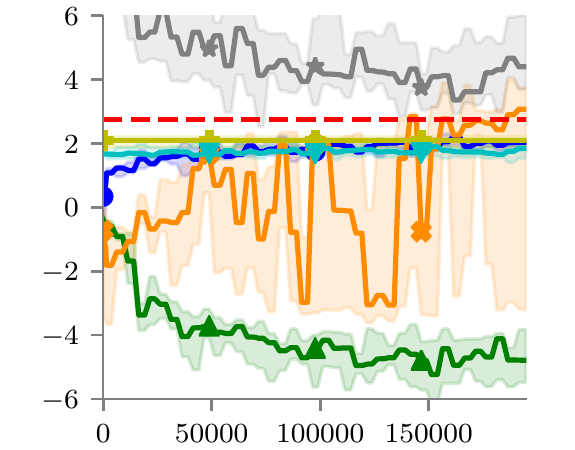} &
    \includegraphics[width=0.17\textwidth]{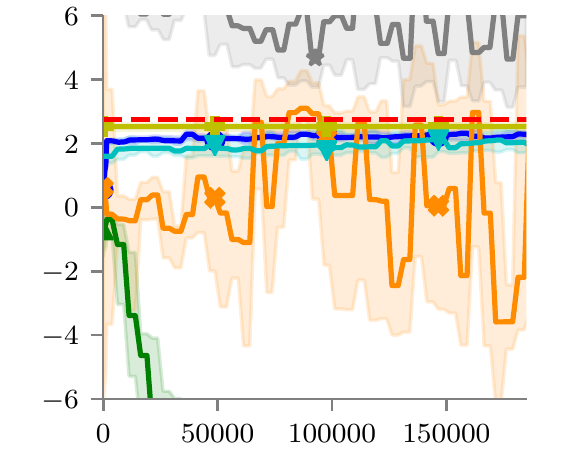} &
    \includegraphics[width=0.17\textwidth]{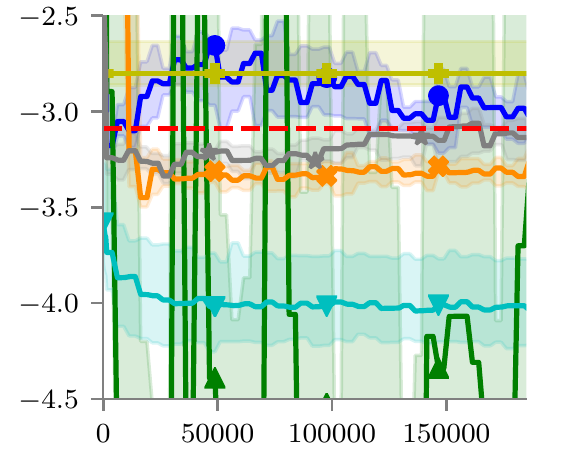} &
    \includegraphics[width=0.17\textwidth]{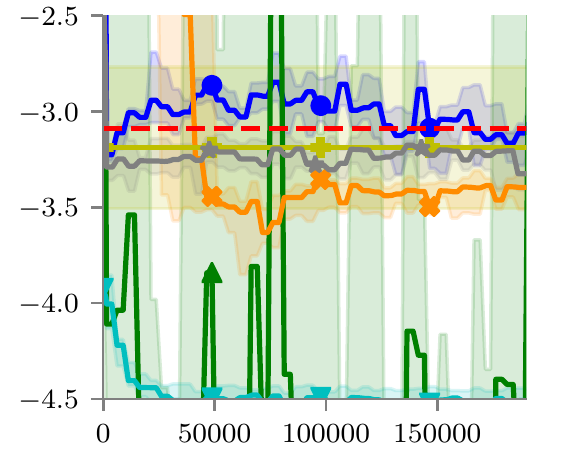} &
    \includegraphics[width=0.17\textwidth]{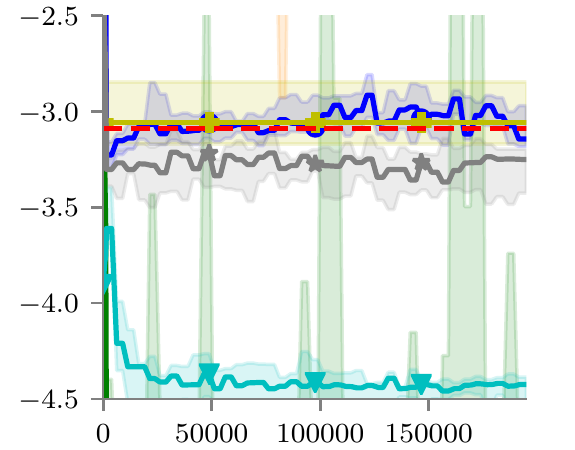} \\
    \multicolumn{7}{c}{\includegraphics[width=0.8\columnwidth]{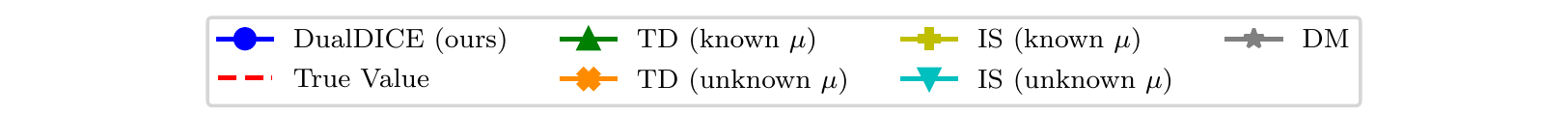}}
    \end{tabular}
  \end{center}
  \caption{We perform OPE on additional control tasks (Acrobot and Pendulum) using our method compared to a number of baselines.  We find that our method continues to perform well against previous OPE methods.  Similar to the results in Figure~\ref{fig:control}, we find that the baselines can perform reasonably well on discrete control (Acrobot) but performance degrades when in a continuous control setting (Pendulum).
  }
  \label{fig:control3}
\end{figure}

\section{Experimental Details}\label{appendix:exp_details}

\subsection{Taxi}
For the Taxi domain, we follow the same protocol as used in~\cite{liu2018breaking}. In this tabular, exact solve setting, the TD methods~\cite{gelada2019off} are equivalent to their kernel-based TD method.  We fix $\gamma$ to $0.995$.
The behavior and target policies are also taken from~\cite{liu2018breaking} (referred in their work as the behavior policy for $\alpha=0$).

In this setting, we solve for the optimal empirical $\nu$ exactly using matrix operations.  Since~\cite{liu2018breaking} perform a similar exact solve for $|S|$ variables $\corrmu(s)$, for better comparison we also perform our exact solve with respect to $|S|$ variables $\nu(s)$.  Specifically, one may follow the same derivations for \ourname with respect to learning $\corrmu$.  The final objective will require knowledge of the importance weights $\pitarget(a|s)/\pibehavior(a|s)$.

\subsection{Control Tasks}
We use the Cartpole and Reacher tasks as given by OpenAI Gym~\cite{brockman2016openai}.  In these tasks we use COP-TD~\cite{gelada2019off} for the TD method (\cite{liu2018breaking} requires a proper kernel, which is not readily available for these tasks).
When assuming an unknown $\pibehavior$, we learn a neural network policy $\hat\pibehavior$ using behavior cloning, and use its probabilities for computing importance weights $\pitarget(a|s)/\pibehavior(a|s)$.
All neural networks are feed-forward with two hidden layers of dimension $64$ and $\tanh$ activations.

We modify the Cartpole task to be infinite horizon: We use the same dynamics as in the original task but change the reward to be $-1$ if the original task returns a termination (when the pole falls below some threshold) and $1$ otherwise.
We train a policy on this task until convergence.  We then define the target policy $\pitarget$ as a weighted combination of this pre-trained policy (weight $0.7$) and a uniformly random policy (weight $0.3$).  The behavior policy $\pibehavior$ for a specific $0\le \alpha\le 1$ is taken to be a weighted combination of the pre-trained policy (weight $0.55 + 0.15\alpha$) and a uniformly random policy (weight $0.45 - 0.15\alpha$). We use $\gamma=0.99$, which yields an average step reward of $\approx 0.8$ for $\pitarget$ and $\approx 0.1$ for $\pibehavior$ with $\alpha=0$.
We generate an off-policy dataset by running the behavior policy for $200$ epsiodes, each of length $250$ steps.
We train each stationary distribution correction estimation method using the Adam optimizer with batches of size $2048$ and learning rates chosen using a hyperparameter search (the optimal learning rate found for either method was $\approx 0.003$).

For the Reacher task, we train a deterministic policy until convergence.  We define the target policy $\pitarget$ as a Gaussian with mean given by the pre-trained policy and standard deviation given by $0.1$.
The behavior policy $\pibehavior$ for a specific $0\le \alpha\le 1$ is taken to be a Gaussian with mean given by the pre-trained policy and standard deviation given by $0.4 - 0.3\alpha$. We use $\gamma=0.99$, which yields an average step reward of $\approx -0.12$ for $\pitarget$ and $\approx -0.50$ for $\pibehavior$ with $\alpha=0$.
We generate an off-policy dataset by running the behavior policy for $1000$ epsiodes, each of length $40$ steps.
We train each stationary distribution correction estimation method using the Adam optimizer with batches of size $2048$ and learning rates chosen using a hyperparameter search (the optimal learning rate found for either method was $\approx 0.0001$).

\subsection{Continuous Grid}
For this task, we create a $10\times10$ grid which the agent can traverse by moving left/right/up/down.  The observations are the $x,y$ coordinates of the square the agent is on.  The reward at each step is given by $\exp\{-0.2|x-9|-0.2|y-9|\}$.  We use $\gamma=0.995$.
The target policy $\pitarget$ is taken to be the optimal policy for this task plus $0.1$ weight on uniform exploration.  The behavior policy $\pibehavior$ is taken to be the optimal policy plus $0.7$ weight on uniform exploration.
We train using batches of size the Adam optimizer with batches of size $512$ and learning rates $0.001$ for $\nu$ and $0.0001$ for $\zeta$.
\section{Proofs}\label{appendix:proofs}

We provide the proof for Theorem~\ref{theorem:total-error}. We first decompose the error in Section~\ref{subsection:error-decompose}. Then, we analyze the statistical error and optimization error in~Section~\ref{subsection:stat-error} and Section~\ref{subsection:opt-error}, respectively. The total error will be discussed in~\ref{subsection:total-error}.

Although the proposed estimator can use any general convex function $f$, as a first step towards a more complete theoretical understanding, we consider the special case of $f(x) = \frac{1}{2}x^2$. Clearly, $f\rbr{\cdot}$ now is $\eta$-strongly convex with $\eta=1$. 
Under Assumption~\ref{assumption:ref_property}, we need only consider $\nbr{\nu}_\infty\le C$, which implies that $\nbr{\nu - \Bcal^\pi\nu}_\infty\le\frac{1+\gamma}{1 - \gamma}C$, and that $f(x)$ is $\kappa$-Lipschitz continuous with $\kappa = \frac{1+\gamma}{1 - \gamma}C$. Similarly, $f^*(y) = \frac{1}{2}y^2$ is $L$-Lipschitz continuous with $L = C$ on $\nbr{w}_\infty\le C$.  The following assumption will be needed.
\begin{assumption}[MDP regularity]\label{assumption:mdp_reg}
We assume the observed reward is uniformly bounded, \ie, 
$\nbr{\hat r\rbr{s, a}}_\infty\le C_r$ for some constant $C_r>0$.  It follows that the reward's mean and variance are both bounded in $[-C_r,C_r]$.
\end{assumption}

For convenience, the objective function of DualDICE is repeated here:
\begin{align*}
J(\nu, \zeta) =~ & \E_{(s,a,s')\sim\visitrb, a'\sim\pitarget(s')}\left[(\nu(s,a) - \gamma \nu(s',a'))\zeta(s,a) - \zeta(s,a)^2 / 2\right] \\
 & - (1 - \gamma)~\E_{s_0\sim\beta,a_0\sim\pitarget(s_0)} \left[ \nu(s_0,a_0) \right].
\end{align*}
We will also make use of the objective in the form prior to introduction of $\zeta$, which we denote as $J(\nu)$:
\begin{align*}
J(\nu) =~ & \frac{1}{2}\E_{(s,a)\sim\visitrb}\left[(\nu - \bellman^{\pitarget}\nu)(s,a)^2\right]
  - (1 - \gamma)~\E_{s_0\sim\beta,a_0\sim\pitarget(s_0)} \left[ \nu(s_0,a_0) \right].
\end{align*}
Let $\hat J(\nu, \zeta)$ denotes the empirical surrogate of $J(\nu, \zeta)$ with optimal solution as $(\hat\nu^*, \hat\zeta^*)$. We denote $\nu^*_\Fcal = \argmin_{\nu\in\Fcal} J\rbr{\nu}$ and $\nu^* = \argmin_{\nu\in S\times A\rightarrow\sR} J\rbr{\nu}$. We denote $L(\nu) = \max_{\zeta\in \gH} J(\nu, \zeta)$ and $\hat L(\nu)=\max_{\zeta\in \gH} \hat J(\nu, \zeta)$ as the primal objectives, and $\ell(\zeta) = \min_{\nu\in\gF} J(\nu, \zeta)$, $\hat\ell(\zeta) = \min_{\nu\in\gF} \Jhat(\nu, \zeta)$ as the dual objectives. We apply some optimization algorithm $OPT$ for optimizing $\hat J(\nu, \zeta)$ with 
samples $\cbr{s_i, a_i, r_i, s'_i}_{i=1}^N\sim \visitrb$, $\cbr{s^i_0}_{i=1}^N\sim \beta$, and target actions $a'_i\sim\pitarget(s'_i),a^i_0\sim\pitarget(s^i_0)$ for $i=1,\dots,N$.
We denote the outputs of $OPT$ by $(\hat \nu, \hat \zeta)$.



\subsection{Error Decomposition}\label{subsection:error-decompose}
Let 
\[
\overline{R}\rbr{s, a} = \E_{\cdot|s,a}\sbr{r}.
\]
Observe that
\[
\rho(\pi) = \E_{d^\Dcal}\sbr{w_{\pi/\Dcal}(s, a) \cdot \overline{R}(s, a)}\,.
\]
We begin by considering the estimation error induced by using $(\hat\nu - \hat\bellman^{\pitarget}\hat\nu)(s,a)$ as estimates of $\corr(s,a)$, where $\hat\bellman^{\pitarget}$ denotes the empirical Bellman backup with respect to samples from $\visitrb,\pitarget$.  We will subsequently reconcile this with the true implementation of DualDICE, which uses $\hat\zeta(s,a)$ as estimates of $\corr(s,a)$.

The mean squared error of the policy value estimate when using $(\hat\nu - \hat\bellman^{\pitarget}\hat\nu)(s,a)$ in place of $\corr(s,a)$ can be decomposed as
\begin{eqnarray}\label{eq:target-error}
&&\rbr{\hat\E_{d^\Dcal}\sbr{\rbr{\hat\nu - \hat\Bcal^{\pi}\hat\nu}(s, a) \cdot r} - \E_{d^\Dcal}\sbr{w_{\pi/\Dcal}(s, a) \cdot \overline{R}(s, a)}}^2 \\
&=& \Big(\hat\E_{d^\Dcal}\sbr{\rbr{\hat\nu - \hat\Bcal^{\pi}\hat\nu}(s, a) \cdot r} -\hat\E_{d^\Dcal}\sbr{\rbr{\hat\nu - \hat\Bcal^{\pi}\hat\nu}(s, a) \cdot \overline{R}(s, a)} \\
&&+\hat\E_{d^\Dcal}\sbr{\rbr{\hat\nu - \hat\Bcal^{\pi}\hat\nu}(s, a) \cdot \overline{R}(s, a)} - \hat\E_{d^\Dcal}\sbr{\rbr{\hat\nu^* - \hat\Bcal^{\pi}\hat\nu^*}(s, a) \cdot \overline{R}(s, a)} \nonumber\\
&&+ \hat\E_{d^\Dcal}\sbr{\rbr{\hat\nu^* -\hat\Bcal^{\pi}\hat\nu^*}(s, a) \cdot \overline{R}(s, a)} - \E_{d^\Dcal}\sbr{w_{\pi/\Dcal}(s, a) \cdot \overline{R}(s, a)}\Big)^2\nonumber\\
&\le& 4\underbrace{\rbr{\hat\E_{d^\Dcal}\sbr{\rbr{\hat\nu - \hat\Bcal^{\pi}\hat\nu}(s, a) \cdot r} -\hat\E_{d^\Dcal}\sbr{\rbr{\hat\nu - \hat\Bcal^{\pi}\hat\nu}(s, a) \cdot \overline{R}(s, a)}}^2}_{\eps_r}\\
&& +4\underbrace{\rbr{\hat\E_{d^\Dcal}\sbr{\rbr{\hat\nu - \hat\Bcal^{\pi}\hat\nu}(s, a) \cdot \overline{R}(s, a)} - \hat\E_{d^\Dcal}\sbr{\rbr{\hat\nu^* - \hat\Bcal^{\pi}\hat\nu^*}(s, a) \cdot \overline{R}(s, a)}}^2}_{\eps_{1}}\\
&& + 4\underbrace{\rbr{\hat\E_{d^\Dcal}\sbr{\rbr{\hat\nu^* -\hat\Bcal^{\pi}\hat\nu^*}(s, a) \cdot \overline{R}(s, a)} - \E_{d^\Dcal}\sbr{w_{\pi/\Dcal}(s, a) \cdot \overline{R}(s, a)}}^2}_{\eps_2}.
\end{eqnarray}

The first term, $\eps_r$, is induced by the randomness in observed reward, and we have
\begin{eqnarray*}
{\eps_r} \le \rbr{\hat\E_{d^\Dcal}\sbr{\rbr{\hat\nu - \hat\Bcal^\pi\hat\nu}\rbr{s, a}\cdot\rbr{\hat r\rbr{s, a} - r\rbr{s, a} } }}^2 \le \rbr{\frac{1 + \gamma}{1 - \gamma}}^2 C^2 \rbr{\hat\E_{d^\Dcal}\sbr{\hat r\rbr{s, a}} - \hat\E_{d^\Dcal}\sbr{r\rbr{s, a}}}^2,
\end{eqnarray*}
which will be discussed in~\secref{subsection:stat-error}. 

We consider the $\eps_{1}$ as
\begin{eqnarray*}
\eps_{1} \le C_r^2\nbr{\rbr{\hat\nu - \hat\Bcal^{\pi}\hat\nu} - \rbr{\hat\nu^* - \hat\Bcal^{\pi}\hat\nu^*}}^2_{\hat\Dcal}\le C_r^2\rbr{\underbrace{\nbr{\hat\zeta - \hat\zeta^*}^2_{\hat\Dcal} + \nbr{\rbr{\hat\nu^* - \hat\Bcal^{\pi}\hat\nu^*} - \rbr{\hat\nu - \hat\Bcal^{\pi}\hat\nu}}^2_{\hat\Dcal}}_{\hat\eps_{opt}}}
\end{eqnarray*}
which is the error induced by optimization $OPT$.

For the last term $\eps_2$, we have
\begin{eqnarray*}
\eps_2  &\le& 2\underbrace{\rbr{\hat\E_{d^\Dcal}\sbr{\rbr{\hat\nu^* -\hat\Bcal^{\pi}\hat\nu^*}(s, a) \cdot r(s, a)} - \E_{d^\Dcal}\sbr{\rbr{\hat\nu^* -\Bcal^{\pi}\hat\nu^*}(s, a) \cdot r(s, a)}}^2}_{\eps_{stat}} \\
&& + 2\rbr{\E_{d^\Dcal}\sbr{\rbr{\hat\nu^* -\Bcal^{\pi}\hat\nu^*}(s, a) \cdot r(s, a)} - \E_{d^\Dcal}\sbr{w_{\pi/\Dcal}(s, a) \cdot r(s, a)}}^2\\
&\le&2{\eps_{stat}} + 2\rbr{\E_{d^\Dcal}\sbr{\rbr{\hat\nu^* -\Bcal^{\pi}\hat\nu^*}(s, a) \cdot r(s, a)} -  \E_{d^\Dcal}\sbr{\rbr{\nu^* - \Bcal^{\pi}\nu^*}(s, a) \cdot r(s, a)}}^2.\quad \text{(due to~\eqref{eq:zeta-to-nu})}
\end{eqnarray*}

For the first term $\eps_{stat}$, which is due to finite samples, we will bound in~\secref{subsection:stat-error}. 

For the second term, we have
\begin{eqnarray*}
&&\rbr{\E_{d^\Dcal}\sbr{\rbr{\hat\nu^* - \Bcal^{\pi}\hat\nu^*}(s, a) \cdot r(s, a)} - \E_{d^\Dcal}\sbr{\rbr{\nu^* - \Bcal^{\pi}\nu^*}(s, a) \cdot r(s, a)}}^2\\
&\le& \E_{d^\Dcal}\sbr{r\rbr{s,a}^2 \cdot \rbr{\rbr{\hat\nu^* - \Bcal^{\pi}\hat\nu^*}(s, a) - \rbr{\nu^* - \Bcal^{\pi}\nu^*}\rbr{s, a}}^2}\\
&\le& C^2_r \nbr{\rbr{\hat\nu^* - \Bcal^{\pi}\hat\nu^*} - \rbr{\nu^* - \Bcal^{\pi}\nu^*}}^2_{\Dcal}\\
&\le& \frac{2C^2_r}{\eta}\rbr{J\rbr{\hat \nu^*} - J\rbr{\nu^*}},
\end{eqnarray*}
where the last inequality comes from the $\eta$-strongly convexity of $f$ and the optimality of $\nu^*$.

We then consider the error between $J(\hat \nu^*)$ and $J(\nu^*)$, which can be decomposed as
\begin{eqnarray*}
J\rbr{\hat\nu^*} - J\rbr{\nu^*} &=& J\rbr{\hat\nu^*} - J\rbr{\nu_\Fcal^*} + J\rbr{\nu_\Fcal^*} - J\rbr{\nu^*}\\
&=& J\rbr{\hat\nu^*} - L\rbr{\hat\nu^*} + L\rbr{\hat\nu^*} - L\rbr{\nu_\Fcal^*} + L\rbr{\nu^*_\Fcal} - J\rbr{\nu_\Fcal^*} + J\rbr{\nu_\Fcal^*} - J\rbr{\nu^*}.
\end{eqnarray*}
We bound this expression term-by-term from the right.
For the term $J\rbr{\nu_\Fcal^*} - J\rbr{\nu^*}$, we have
\begin{eqnarray*}
J\rbr{\nu_\Fcal^*} - J\rbr{\nu^*} &=& \E_\Dcal\sbr{f\rbr{\nu_\Fcal^* - \Bcal^\pi \nu^*_\Fcal} - f\rbr{\nu^* - \Bcal^\pi \nu^*}} - \E_{\beta\pi}\sbr{\nu_\Fcal^* - \nu^*}\\
&\le& \kappa\nbr{\nu_\Fcal^* - \nu^*}_{\Dcal, 1} +\kappa\nbr{\Bcal^\pi\rbr{\nu_\Fcal^* - \nu^*}}_{\Dcal, 1} + \nbr{\nu_\Fcal^* - \nu^*}_{\beta\pi, 1}\\
&\le& \max\rbr{\kappa + \kappa\nbr{\Bcal^\pi}_{\Dcal, 1}, 1}
\rbr{\nbr{\nu_\Fcal^* - \nu^*}_{\Dcal, 1} + \nbr{\nu_\Fcal^* - \nu^*}_{\beta\pi, 1}} \\
&\le& \max\rbr{\kappa + \kappa\nbr{\Bcal^\pi}_{\Dcal, 1}, 1} \cdot \eps_{approx}\rbr{\Fcal}\,,
\end{eqnarray*}
where $\eps_{approx}\rbr{\Fcal}\defeq \sup_{\nu\in S\times A\rightarrow\sR}\inf_{\nu\in\Fcal} \rbr{\nbr{\nu_\Fcal - \nu}_{\Dcal, 1} + \nbr{\nu_\Fcal - \nu}_{\beta\pi, 1}}$, due to the approximation with $\Fcal$ for $\nu$. 

For the term $L\rbr{\nu^*_\Fcal} - J\rbr{\nu_\Fcal^*}$, we have by definition that
\begin{eqnarray*}
L\rbr{\nu^*_\Fcal} - J\rbr{\nu_\Fcal^*} = \max_{\zeta\in\Hcal} J\rbr{\nu^*_{\Fcal},\zeta} - \max_{\zeta\in S\times A\rightarrow \sR} J\rbr{\nu^*_{\Fcal},\zeta}\le0
\end{eqnarray*}

For the term $L(\hat \nu^*)-L(\nu_\Fcal^*)$, 
\begin{eqnarray*}
L(\hat \nu^*) - L(\nu_\Fcal^*) &=& L(\hat \nu^*) - \hat L(\hat \nu^*) + \hat L(\hat \nu^*) - \hat L(\nu_\Fcal^*) + \hat L(\nu_\Fcal^*) - L(\nu^*_\Fcal)\\
&\le & L(\hat \nu^*) - \hat L(\hat \nu^*) +\hat L(\nu_\Fcal^*) - L(\nu^*_\Fcal) \\
&\le& 2 \sup_{\nu\in\Fcal} \abr{L(\nu) - \hat L(\nu)} \\
&=& 2 \sup_{\nu\in\Fcal} \abr{\max_{\zeta\in\Hcal} J\rbr{\nu, \zeta} - \max_{\zeta\in\Hcal} \Jhat\rbr{\nu, \zeta}}\\
&\le& 2 \sup_{\nu\in\Fcal, \zeta\in\Hcal} \abr{\Jhat\rbr{\nu, \zeta} - J\rbr{\nu, \zeta}} \\
&=& 2 \cdot \eps_{est}\rbr{\gF},
\end{eqnarray*}
where in the first inequality we have used the fact that $\hat L(\hat \nu^*) - \hat L(\nu_\Fcal^*)\le 0$ due to the optimality of $\hat \nu^*$, and in the last step $\eps_{est}\rbr{\gF} := \sup_{\nu\in\Fcal, \zeta\in\Hcal} \abr{\Jhat\rbr{\nu, \zeta} - J\rbr{\nu, \zeta}}$.


For the term $ J\rbr{\hat\nu^*} - L\rbr{\hat\nu^*}$, we have
\begin{eqnarray*}
J\rbr{\hat\nu^*} - L\rbr{\hat\nu^*} &=& \max_{\zeta \in S\times A\rightarrow\sR} J\rbr{\hat\nu^*, \zeta} - \max_{\zeta\in\Hcal} J\rbr{\hat\nu^*,\zeta} \\
&\le& \rbr{L + \frac{1+\gamma}{1-\gamma}C}\underbrace{\nbr{\zeta^*_\Hcal - \zeta^*}_{\Dcal, 1}}_{\le\eps_{approx}\rbr{\Hcal}},
\end{eqnarray*}
where $\eps_{approx}\rbr{\Hcal}\defeq \sup_{\zeta\in S\times A\rightarrow \sR}\inf_{\zeta\in\Hcal}\rbr{\nbr{\zeta_\Hcal - \zeta}_{\Dcal, 1} + \nbr{\zeta_\Hcal - \zeta}_{\beta\pi, 1}}$, due to the approximation with $\Hcal$ for $\zeta$.

Finally, we can decompose the squared error as 
\begin{multline}\label{eq:total-error-decomp}
\rbr{\hat\E_{d^\Dcal}\sbr{\rbr{\hat\nu - \hat\Bcal^{\pi}\hat\nu}(s, a) \cdot \hat r(s, a)} - \rho(\pi)}^2 \\
\le \frac{16C_r^2}{\eta}\rbr{\max\rbr{\kappa + \kappa\nbr{\Bcal^\pi}_{\Dcal, 1}, 1}\eps_{approx}\rbr{\Fcal} + \rbr{L + \frac{1 + \gamma}{1 - \gamma}C}\eps_{approx}\rbr{\Hcal}} \\
 + 4\eps_r+ 8\eps_{stat} + \frac{32C_r^2}{\eta}{\eps_{est}\rbr{\Fcal}} + 4\hat\eps_{opt}.
\end{multline}


\paragraph{Remark (Dual OPE estimator):} We now reconcile the above derivations with the use of $\hat\zeta(s,a)$ as estimates of $\corr(s,a)$. Note that in the implementation of DualDICE we use the estimator, 
$$
\hat\E_{d^\Dcal}\sbr{\hat\zeta\rbr{s, a}\cdot r}
$$
for off-policy policy evaluation. In this case, the error can be decomposed as
\begin{eqnarray}\label{eq:dual-error-decompose}
&&\rbr{\hat\E_{d^\Dcal}\sbr{\hat\zeta\rbr{s, a}\cdot r} - \E_{d^\Dcal}\sbr{w_{\pi/\Dcal}(s, a) \cdot \overline{R}(s, a)}}^2\\
&\le& 2\rbr{\hat\E_{d^\Dcal}\sbr{\hat\zeta\rbr{s, a}\cdot r} - \hat\E_{d^\Dcal}\sbr{\rbr{\hat\nu - \hat\Bcal^{\pi}\hat\nu}(s, a) \cdot r}}^2\\
&& + 2\rbr{\hat\E_{d^\Dcal}\sbr{\rbr{\hat\nu - \hat\Bcal^{\pi}\hat\nu}(s, a) \cdot r} - \E_{d^\Dcal}\sbr{w_{\pi/\Dcal}(s, a) \cdot \overline{R}(s, a)}}^2.
\end{eqnarray}
The second term above is the same as given in equation~\ref{eq:target-error}.  The first term can be rewritten as,
\begin{eqnarray*}
\rbr{\hat\E_{d^\Dcal}\sbr{\hat\zeta\rbr{s, a}\cdot r} - \hat\E_{d^\Dcal}\sbr{\rbr{\hat\nu - \hat\Bcal^{\pi}\hat\nu}(s, a) \cdot r}}^2\le C_r^2\nbr{\hat\zeta - \rbr{\hat\nu - \hat\Bcal^{\pi}\hat\nu}}^2_{\hat\Dcal},
\end{eqnarray*}
which can be bounded as follows:
\begin{eqnarray}\label{eq:zeta-extra-error}
\lefteqn{\nbr{\hat\zeta - \rbr{\hat\nu - \hat\Bcal^{\pi}\hat\nu}}^2_{\hat\Dcal}} \nonumber \\
&= &\nbr{\hat\zeta - \hat\zeta^* + \hat \zeta^* - \rbr{\hat\nu^* - \hat\Bcal^{\pi}\hat\nu^*} + \rbr{\hat\nu^* - \hat\Bcal^{\pi}\hat\nu^*} - \rbr{\hat\nu - \hat\Bcal^{\pi}\hat\nu}}^2_{\hat\Dcal}\\
&\le& 4\nbr{\hat\zeta - \hat\zeta^*}^2_{\hat\Dcal} + 4\nbr{\rbr{\hat\nu^* - \hat\Bcal^{\pi}\hat\nu^*} - \rbr{\hat\nu - \hat\Bcal^{\pi}\hat\nu}}^2_{\hat\Dcal} + 4\nbr{\hat \zeta^* - \rbr{\hat\nu^* - \hat\Bcal^{\pi}\hat\nu^*}}^2_{\hat\Dcal} \nonumber
\end{eqnarray}
where the first two terms correspond to optimization error $\hat\eps_{opt}$, and the last to approximation error due to parametrization.

Specifically, when the output of our algorithm $\hat\zeta\rbr{s, a} = \rbr{\hat\nu - \hat\Bcal^\pi\hat\nu}\rbr{s, a}$ for $\forall \rbr{s, a}\in \hat\Dcal$, the extra term vanishes, and the error is the same as in equation~\ref{eq:target-error}.

\subsection{Statistical Error}\label{subsection:stat-error}

We analyze the statistical error $\eps_r$, $\eps_{stat}$ and $\epsilon_{est}\rbr{\gF}$ in this section. We discussed in batch learning setting with \emph{i.i.d.} samples~\cite{sutton2008dyna}. However, by exploiting blocking technique in Proposition 15 of~\cite{yu1994rates}, following~\cite{antos2006learning,lazaric2012finite,dai2017sbeed}, all the sample complexity we provided can be easily generalized for single $\beta$-mixing sample path, \ie, $\cbr{s_i, a_i, r_i, s'_i}_{i=1}^N$ is strictly stationary and mixing in an exponential rate with parameter $b, \chi>0$ if $\beta_m = \Ocal\rbr{\exp\rbr{-bm^{-\chi}}}$, which we omit for the sake of exposition simplicity.

\paragraph{Bounding $\eps_r$.}
Recall that $\overline{R}\rbr{s, a} = \E_{\cdot|s,a}\sbr{r}$, so
\begin{eqnarray}\label{eq:eps-0}
\E\sbr{\eps_r} &\le& \rbr{\frac{1 + \gamma}{1 - \gamma}}^2 C^2 \E\sbr{\rbr{\frac{1}{N}\sum_{i=1}^N r_i - \E\sbr{\frac{1}{N}\sum_{i=1}^N r_i} }^2} \nonumber \\
&=& \rbr{\frac{1 + \gamma}{1 - \gamma}}^2 C^2 \sV\rbr{\frac{1}{N}\sum_{i=1}^N r_i} \nonumber \\
&\le&\frac{1}{N} \rbr{\frac{1 + \gamma}{1 - \gamma}}^2 C^2 \sup_{s,a} \sV\rbr{r|s,a} = \Ocal\rbr{\frac{1}{N}}.
\end{eqnarray}
Since $r\rbr{s, a}$ and $\hat r\rbr{s, a}$ is bounded, we can also obtain high-probability deviation bounds using standard concentration inequalities~\cite{boucheron16concentration}.

\paragraph{Bounding $\eps_{est}\rbr{\Fcal}$.}
By definition, we have
$$
\epsilon_{est}\rbr{\gF} = \sup_{\nu\in \gF, \zeta\in \gH}\abr{\Jhat\rbr{\nu, \zeta} - J\rbr{\nu, \zeta}},
$$
which can be bounded using a covering-number argument outlined below.

%
%
%
We will need Pollard's tail inequality that relates maximum deviation to the covering number of a function class:
\begin{lemma}\label{lemma:beta_tail}~\cite{pollard1984convergence}
Let $\gG$ be a permissible class of $\gZ\rightarrow[-M, M]$ functions and $\cbr{Z_i}_{i=1}^N$ are \iid~samples from some distribution.  Then, for any given $\epsilon>0$,
\begin{eqnarray*}
\sP\rbr{\sup_{g\in\gG}\abr{\frac{1}{N}\sum_{i=1}^N g(Z_i) - \E\sbr{g(Z)}} > \epsilon}\le 8\E\sbr{\gN_1\rbr{\frac{\epsilon}{8}, \gG, \cbr{Z_i}_{i=1}^N}}\exp\rbr{\frac{-N\epsilon^2}{512M^2}}.
\end{eqnarray*}
\end{lemma}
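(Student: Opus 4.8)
The plan is to treat Lemma~\ref{lemma:beta_tail} as the classical uniform deviation inequality of~\cite{pollard1984convergence}, which the paper quotes verbatim; the proof one would reproduce is the standard three-step recipe of symmetrization, Rademacher randomization, and a covering-number union bound. Write $P_N g \defeq \tfrac{1}{N}\sum_{i=1}^N g(Z_i)$ and $P g \defeq \E\sbr{g(Z)}$. \textbf{Symmetrization.} Introduce an independent ghost sample $\cbr{Z_i'}_{i=1}^N$ with empirical average $P_N' g$. On the event $\cbr{\sup_{g\in\gG}\abr{P_N g - P g} > \epsilon}$, fix a (data-dependent) near-maximizer $g$; since $\nbr{g}_\infty\le M$ it has variance at most $M^2$, so Chebyshev gives $\sP\rbr{\abr{P_N' g - P g}\le \epsilon/2}\ge \tfrac12$ whenever $N\epsilon^2$ is not too small relative to $M^2$ (automatic here, since otherwise the claimed bound already exceeds $1$). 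Intersecting events yields $\sP\rbr{\sup_{g\in\gG}\abr{P_N g - P g} > \epsilon}\le 2\,\sP\rbr{\sup_{g\in\gG}\abr{P_N g - P_N' g} > \epsilon/2}$.

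\textbf{Randomization and covering.} By exchangeability of the pairs $(Z_i, Z_i')$, inserting i.i.d. Rademacher signs $\sigma_i$ leaves the law of $\sup_{g\in\gG}\abr{\tfrac{1}{N}\sum_i \sigma_i(g(Z_i) - g(Z_i'))}$ unchanged, so it suffices to bound the probability that this exceeds $\epsilon/2$. Condition on the $2N$ points $\cbr{Z_i, Z_i'}_{i=1}^N$ and take a minimal empirical-$L^1$ net of $\gG$ on these points, of cardinality controlled by $\gN_1\rbr{\tfrac{\epsilon}{8}, \gG, \cbr{Z_i}_{i=1}^N}$; replacing any $g$ by its nearest net element changes $\tfrac{1}{N}\sum_i\sigma_i(g(Z_i)-g(Z_i'))$ by at most $\epsilon/4$, reducing the supremum (up to this slack) to a maximum over the finite net. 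For each fixed net element $\bar g$, the terms $\sigma_i(\bar g(Z_i)-\bar g(Z_i'))$ are conditionally independent, mean zero, and lie in $[-2M,2M]$, so Hoeffding bounds the corresponding tail by $2\exp\rbr{-cN\epsilon^2/M^2}$ for an absolute constant $c$; a union bound over the net, an expectation over the data, and the factor $2$ from symmetrization then deliver the stated inequality.

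\textbf{Main obstacle.} There is no conceptual difficulty here; the two points demanding care are (i) tracking the numerical constants so that they collapse to exactly the factor $8$, the cover scale $\epsilon/8$, and the exponent $N\epsilon^2/(512M^2)$ --- in particular the Chebyshev step implicitly needs $N$ large enough, which is harmless --- and (ii) the measure-theoretic regularity (joint measurability of $g\mapsto P_N g$ over $\gG$ and validity of the sign randomization) that is exactly what the hypothesis that $\gG$ be \emph{permissible} is there to provide. Both are carried out in full in~\cite{pollard1984convergence}, so in the paper we simply invoke the result.
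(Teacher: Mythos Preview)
Your proposal is correct: the paper does not prove Lemma~\ref{lemma:beta_tail} at all but simply quotes it as Pollard's tail inequality from~\cite{pollard1984convergence}, and your sketch of symmetrization, Rademacher randomization, and an $L^1$-covering union bound is exactly the standard argument given there. Your closing remark that ``in the paper we simply invoke the result'' matches the paper's treatment precisely.
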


The covering number can then be bounded in terms of the function class's pseudo-dimension:
\begin{lemma}\label{lemma:cover_pseudo}[Corollary 3,~\cite{haussler1995sphere}] For any set $\Xcal$, any points $x^{1:N}\in\Xcal^N$, any class $\Fcal$ of functions on $\Xcal$ taking values in $[0, M]$ with pseudo-dimension $D_{\Fcal}<\infty$, and any $\epsilon>0$, 
$$
\Ncal_1\rbr{\epsilon, \Fcal, x^{1:N}}\le e\rbr{D_\Fcal + 1}\rbr{\frac{2eM}{\epsilon}}^{D_\Fcal}.
$$
\end{lemma}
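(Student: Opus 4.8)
The plan is to obtain the bound by the classical reduction from $\ell_1$ covers of a real-valued class to packings in the Boolean hypercube, combined with a Sauer--Shelah-type estimate. First I would pass from covering to packing: writing $\rho(f,g)\defeq\frac1N\sum_{i=1}^N\abr{f(x_i)-g(x_i)}$ for the normalized empirical $\ell_1$ pseudometric that defines $\Ncal_1$, a maximal $\epsilon$-separated subset of $\Fcal|_{x^{1:N}}$ is automatically an $\epsilon$-cover, so $\Ncal_1(\epsilon,\Fcal,x^{1:N})\le\Mcal_1(\epsilon,\Fcal,x^{1:N})$, the $\epsilon$-packing number. I may assume $\epsilon\le M$ (the regime relevant to every application of the lemma; for $\epsilon>M$ one has $\Ncal_1=1$), so it suffices to bound $\Mcal_1$.

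Next I would discretize the range. Fix a grid $t_1<\dots<t_K$ of spacing $\Theta(\epsilon)$ spanning $[0,M]$, with $K=\Theta(M/\epsilon)$, and define the Boolean class on the ground set $\Zcal\defeq\cbr{x_1,\dots,x_N}\times\cbr{t_1,\dots,t_K}$ by $g_f(x_i,t_j)\defeq\1\sbr{f(x_i)\ge t_j}$. Two observations drive the argument. First, $\cbr{g_f:f\in\Fcal}$ restricted to $\Zcal$ is a restriction of the subgraph class whose VC dimension \emph{defines} the pseudo-dimension, so it has VC dimension at most $D_\Fcal$. Second, the Hamming distance between $g_f$ and $g_g$ on $\Zcal$ equals, summed over $i$, the number of grid points separating $f(x_i)$ from $g(x_i)$, which is comparable to $\frac1\epsilon\sum_i\abr{f(x_i)-g(x_i)}$; hence $\rho(f,g)\ge\epsilon$ forces Hamming distance of order $N$. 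Therefore $\Mcal_1(\epsilon,\Fcal,x^{1:N})$ is bounded by the packing number, at radius of order $N$, of a subset of $\cbr{0,1}^{NK}$ of VC dimension at most $D_\Fcal$.

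The core estimate is Haussler's packing bound for the Boolean cube: any $k$-separated subset $P$ of a class $C\subseteq\cbr{0,1}^n$ with $\mathrm{VCdim}(C)=d$ satisfies $\abr{P}\le e(d+1)(en/k)^d$. I would prove this by the random-coordinate-projection argument: sample $m$ coordinates independently and uniformly; a fixed pair in $P$, differing in more than $k$ of the $n$ coordinates, collides after projection with probability at most $(1-k/n)^m\le e^{-km/n}$, so a union bound over the $\binom{\abr{P}}2$ pairs shows that for $m$ slightly larger than $(n/k)\ln\abr{P}$ there is a projection on which $P$ stays injective; then $\abr{P}\le\sum_{i=0}^d\binom mi\le(em/d)^d$ by Sauer--Shelah, and substituting the chosen $m$ and unwinding the resulting self-referential inequality yields the prefactor $e(d+1)$. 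Applying this with $n=NK$, $k=\Theta(N)$, $d=D_\Fcal$ gives $\Mcal_1(\epsilon,\Fcal,x^{1:N})\le e(D_\Fcal+1)(eK)^{D_\Fcal}$, which is of the stated form $e(D_\Fcal+1)\rbr{\tfrac{2eM}{\epsilon}}^{D_\Fcal}$ once $K$ and the grid spacing are tuned to absorb the constant.

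I expect the main obstacle to be the sharp-constant bookkeeping: in the Boolean packing bound the quantity controlling $\abr{P}$ depends on the projection dimension $m$ while the admissible range of $m$ depends on $\abr{P}$, so extracting the leading factor $e(d+1)$ rather than a crude polynomial-in-$d$ prefactor (and, in the discretization, choosing the grid so the base is precisely $2eM/\epsilon$) requires Haussler's careful iterative optimization rather than a one-shot choice of $m$. Everything else --- the covering-to-packing step, the VC-dimension comparison, and Sauer--Shelah --- is routine. Since the statement is exactly [Corollary~3,~\cite{haussler1995sphere}], in the paper I would simply invoke it; the above is the self-contained justification.
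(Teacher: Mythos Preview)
The paper gives no proof of this lemma; it is stated purely as a quotation of Corollary~3 in \cite{haussler1995sphere} and then invoked as a black box in the covering-number bounds that follow. You anticipated this correctly in your final sentence, so your proposal matches the paper's treatment exactly.

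Your sketch is nonetheless a sound self-contained justification, following the standard route (covering $\le$ packing, discretize the range, identify the subgraph class so that pseudo-dimension becomes VC dimension, then apply a Boolean-cube packing bound). The one remark worth making is that Haussler's own argument for the sharp $e(d+1)$ prefactor in the Boolean packing lemma proceeds via a shifting/down-set and unit-distance-graph edge-counting technique rather than the random-coordinate-projection argument you outline; the projection route tends to yield a slightly looser prefactor unless iterated carefully, which is precisely the awkwardness you flagged as the main obstacle.
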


With the above technical lemmas, we are ready to bound $\eps_{est}\rbr{\Fcal}$.
\begin{lemma}[Statistical error $\eps_{est}\rbr{\Fcal}$]\label{lemma:stat-error-1}
Under Assumption~\ref{assumption:ref_property}, if $f^*$ is $L$-Lipschitz continuous, with at least probability $1-\delta$,
$$
{\eps_{est}\rbr{\gF}} = \Ocal\rbr{\sqrt\frac{{\log N + \log\frac{1}{\delta}}}{N}}. 
$$
\end{lemma}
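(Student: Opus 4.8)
The plan is to view $\eps_{est}\rbr{\gF}$ as the supremum of an empirical process and bound it by a uniform deviation (Glivenko--Cantelli-type) argument. First I would decompose $\Jhat(\nu,\zeta) - J(\nu,\zeta)$ along the expectations that make up $J$. Writing $\Delta_{\visitrb}$ for the difference between the empirical average over the samples $\cbr{(s_i,a_i,s_i',a_i')}_{i=1}^N$ (with $a_i'\sim\pitarget(s_i')$) and the true expectation under $(s,a,s')\sim\visitrb,\,a'\sim\pitarget(s')$, and $\Delta_{\beta}$ for the analogous difference over $\cbr{(s_0^i,a_0^i)}_{i=1}^N$ with $a_0^i\sim\pitarget(s_0^i)$, we have
\begin{equation*}
\Jhat(\nu,\zeta) - J(\nu,\zeta) = \Delta_{\visitrb}\sbr{\rbr{\nu(s,a)-\gamma\nu(s',a')}\zeta(s,a) - f^*\rbr{\zeta(s,a)}} - (1-\gamma)\,\Delta_{\beta}\sbr{\nu(s_0,a_0)}.
\end{equation*}
Hence $\eps_{est}\rbr{\gF}$ is at most the sum of the uniform deviations over the three function classes $\gG_1 \defeq \cbr{(s,a,s',a')\mapsto\rbr{\nu(s,a)-\gamma\nu(s',a')}\zeta(s,a):\nu\in\gF,\zeta\in\gH}$, $\gG_2\defeq\cbr{(s,a)\mapsto f^*\rbr{\zeta(s,a)}:\zeta\in\gH}$, and $\gG_3\defeq\cbr{(s_0,a_0)\mapsto\nu(s_0,a_0):\nu\in\gF}$. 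By Assumption~\ref{assumption:ref_property} it suffices to consider $\nbr{\nu}_\infty\le C$, $\nbr{\zeta}_\infty\le C$, so functions in $\gG_1$ are bounded by $(1+\gamma)C^2$, functions in $\gG_2$ by $\abr{f^*(0)}+LC$, and functions in $\gG_3$ by $C$; write $M = \Ocal\rbr{C^2/(1-\gamma)}$ for a common bound.

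Next I would apply Pollard's tail inequality (Lemma~\ref{lemma:beta_tail}) to each of $\gG_1,\gG_2,\gG_3$ at deviation level $\epsilon/3$ and bound the covering numbers it produces. Assuming $\gF$ and $\gH$ have finite pseudo-dimensions $D_\gF,D_\gH$ (mild for linear or neural-network parametrizations), a product argument gives $\Ncal_1(\epsilon,\gG_1,\cdot)\le \Ncal_1(\epsilon',\gF,\cdot)^2\,\Ncal_1(\epsilon',\gH,\cdot)$ with $\epsilon'$ a fixed fraction of $\epsilon/M$, a Lipschitz-composition argument gives $\Ncal_1(\epsilon,\gG_2,\cdot)\le\Ncal_1(\epsilon/L,\gH,\cdot)$, and $\Ncal_1(\epsilon,\gG_3,\cdot)=\Ncal_1(\epsilon,\gF,\cdot)$; Lemma~\ref{lemma:cover_pseudo} turns each into a bound of the form $\rbr{c\,M/\epsilon}^{\Ocal(D_\gF+D_\gH)}$. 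Substituting into Lemma~\ref{lemma:beta_tail} and taking a union bound over $i=1,2,3$ gives $\sP\rbr{\eps_{est}\rbr{\gF}>\epsilon}\le c_1\rbr{c_2 M/\epsilon}^{c_3(D_\gF+D_\gH)}\exp\rbr{-c_4 N\epsilon^2/M^2}$. Setting the right-hand side equal to $\delta$ and solving for $\epsilon$ yields $\epsilon = \Ocal\rbr{M\sqrt{\rbr{(D_\gF+D_\gH)\log(MN)+\log(1/\delta)}/N}}$; absorbing $C,\gamma,L$ and the pseudo-dimensions into the $\widetilde\Ocal$, this is exactly $\widetilde\Ocal\rbr{\sqrt{\rbr{\log N + \log(1/\delta)}/N}}$.

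The step I expect to be the main obstacle is the covering-number bookkeeping for the \emph{product} class $\gG_1$: one must choose the radius $\epsilon'$ at which to cover $\gF$ and $\gH$ so that the induced grid covers $\gG_1$ to radius $\epsilon$, using uniform boundedness to control the cross terms $\nu\cdot(\zeta-\zeta')$ and $(\nu-\nu')\cdot\zeta$, and similarly to pass the $L$-Lipschitz property of $f^*$ through the composition defining $\gG_2$ without inflating the exponent. A secondary, routine point is that three distinct sampling distributions appear, which is why the union bound and the standing assumption of $\Theta(N)$ samples from each of $\visitrb$ and $\beta$ are needed; as noted at the start of Section~\ref{subsection:stat-error}, the argument further extends to a single $\beta$-mixing trajectory via the blocking technique of~\cite{yu1994rates} at the cost of an effective sample size, which I would relegate to a remark.
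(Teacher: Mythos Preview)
Your proposal is correct and uses the same core machinery as the paper: Pollard's tail inequality (Lemma~\ref{lemma:beta_tail}) together with Haussler's covering-number bound in terms of pseudo-dimension (Lemma~\ref{lemma:cover_pseudo}), with the covering number of the composite class controlled by covering $\gF$ and $\gH$ separately and exploiting boundedness to handle the product and the $L$-Lipschitz $f^*$ to handle the composition. The final tail bound and the resulting rate are the same.

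The one organizational difference is that the paper does \emph{not} split into your three classes $\gG_1,\gG_2,\gG_3$ and union-bound. Instead it packages the entire integrand into a single function
\[
h_{\nu,\zeta}(s,a,s',a',s_0,a_0) = \rbr{\nu(s,a)-\gamma\nu(s',a')}\zeta(s,a) - f^*\rbr{\zeta(s,a)} - (1-\gamma)\nu(s_0,a_0)
\]
on the product sample space, treats $Z_i=(s_i,a_i,s'_i,a'_i,s_0^i,a_0^i)$ as a single i.i.d.\ draw, and applies Pollard once to the class $\gG=\cbr{h_{\nu,\zeta}:\nu\in\gF,\zeta\in\gH}$. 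The covering-number bound for $\gG$ then picks up \emph{three} factors of $\Ncal_1(\cdot,\gF,\cdot)$ (for $\nu$ evaluated on $\{(s_i,a_i)\}$, $\{(s'_i,a'_i)\}$, and $\{(s_0^i,a_0^i)\}$) and one factor of $\Ncal_1(\cdot,\gH,\cdot)$, yielding exponent $3D_\gF+D_\gH$. Your decomposition gives the same order but distributes the factors across $\gG_1,\gG_2,\gG_3$; the paper's packaging is slightly more compact (one invocation of Pollard, no union bound, and it sidesteps your ``three distinct sampling distributions'' remark entirely), while your version makes the role of each term more transparent. Either route is fine. One small constant to tidy: under Assumption~\ref{assumption:ref_property} the paper takes $\nbr{\nu}_\infty\le C/(1-\gamma)$ (not $C$), since $\nu$ is the value-function reparametrization of the bounded residuals; this only affects the constant $M$ and not the rate.
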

\Bo{This error rate can be further improved by Berstein inequality to $\tilde\Ocal\rbr{\frac{1}{n} + \frac{\eps_{approx}\rbr{\gH}}{\sqrt{n}}}$, which is essential an interpolation between realization case and agnostic case.}

\begin{proof}
Denote $h_{\nu, \zeta}\rbr{s, a, s', a', s_0, a_0} = (\nu(s,a) - \gamma \nu(s',a'))\zeta(s,a) - f^*(\zeta(s,a))  -  (1 - \gamma)~ \nu(s_0,a_0)$, we use lemma~\ref{lemma:beta_tail} with $\gZ = \underbrace{S\times A\times S\times A}_{d^D\pi}\times \underbrace{S\times A}_{\beta\pi}$, $Z_i = \rbr{s_i, a_i, s'_i, a'_i, s^i_0, a^i_0}$ and $\gG = h_{\gF\times\gH}$. 

We first show that $\forall h_{\nu, \zeta}\in \gG$ is bounded. Recall $\nu\in\gF$ and $\zeta\in \gH$ are bounded by $\frac{1}{1-\gamma}C$ and $C$, then, $h_{\nu, \zeta}$ will be bounded by $M_1 = \frac{1+\gamma}{1-\gamma}C^2 + (1 + L)C + \abr{f^*(0)}$. Specifically, 
\begin{eqnarray*}
\nbr{h_{\nu, \zeta}}_\infty &\le& (1 + \gamma)\nbr{\nu}_\infty\nbr{\zeta}_\infty + \rbr{1 -\gamma}\nbr{\nu}_\infty + \nbr{f^*\rbr{\zeta}}_\infty\\
&\le& \frac{1+\gamma}{1-\gamma}C^2 + C + \nbr{f^*\rbr{\zeta} - f^*(0)}_\infty + \abr{f^*\rbr{0}}\\
&\le& \frac{1+\gamma}{1-\gamma}C^2 + C + L \nbr{\zeta}_\infty + \abr{f^*\rbr{0}}\\
&\le& \frac{1+\gamma}{1-\gamma}C^2 + C + L C + \abr{f^*\rbr{0}}.
\end{eqnarray*}
Thus, 
\begin{multline}\label{eq:intermediate}
\sP\rbr{ \sup_{\nu\in \gF, \zeta\in \gH}\abr{\Jhat\rbr{\nu, \zeta} - J\rbr{\nu, \zeta}}\ge \eps} = 
\sP\rbr{\sup_{\nu\in\gF, \zeta\in\gH}\abr{\frac{1}{N}\sum_{i=1}^N h_{\nu, \zeta}\rbr{Z_i} - \E\sbr{h_{\nu, \zeta}}}\ge \epsilon}\\
\le 8\E\sbr{\gN_1\rbr{\frac{\epsilon}{8}, \gG, \cbr{Z_i}_{i=1}^N}}\exp\rbr{\frac{-N\epsilon^2}{512M_1^2}}.
\end{multline}
We bound the distance in $\gG$, 
\begin{eqnarray*}
&&\frac{1}{N}\sum_{i=1}^N\abr{h_{\nu_1,\zeta_1}\rbr{Z_i} - h_{\nu_2,\zeta_2}\rbr{Z_i}}\\
&\le&\frac{\rbr{L + \frac{1+\gamma}{1-\gamma}C}}{N}\sum_{i=1}^N\abr{\zeta_1\rbr{s_i, a_i} - \zeta_2\rbr{s_i, a_i}} + \frac{C}{N}\sum_{i=1}^N\abr{\nu_1\rbr{s_i, a_i} - \nu_2\rbr{s_i, a_i}}\\
&&+\frac{\gamma C}{N}\sum_{i=1}^N\abr{\nu_1\rbr{s'_i, a'_i} - \nu_2\rbr{s'_i, a'_i}} + \frac{\rbr{1 - \gamma}}{N}\sum_{i=1}^N\abr{\nu_1\rbr{s_0^i, a^0_i} - \nu_2\rbr{s_0^i, a^0_i}},
\end{eqnarray*}
which leads to 
\begin{multline}
\Ncal_1\rbr{\rbr{L + \frac{2+\gamma-\gamma^2}{1-\gamma}C + \rbr{1-\gamma}}\epsilon', \gG, \cbr{Z_i}_{i=1}^N}\\
\le \Ncal_1\rbr{\epsilon', \gH, \cbr{s_i, a_i}_{i=1}^N}\Ncal_1\rbr{\epsilon', \gF, \cbr{s_i, a_i}_{i=1}^N}\Ncal_1\rbr{\epsilon', \gF, \cbr{s'_i, a'_i}_{i=1}^N}\Ncal_1\rbr{\epsilon', \gF, \cbr{s_0^i, a_0^i}_{i=1}^N}.
\end{multline}
Applying lemma~\ref{lemma:cover_pseudo}, we can bound the covering number. Denote the pseudo-dimension of $\gF$ and $\gH$ as $D_\nu$ and $D_\zeta$, then, we have
$$
\Ncal_1\rbr{\rbr{L + \frac{2+\gamma-\gamma^2}{1-\gamma}C + \rbr{1-\gamma}}\epsilon', \gG, \cbr{Z_i}_{i=1}^N}\le e^4\rbr{D_\gF+1}^3\rbr{D_\gH+1}\rbr{\frac{4eM_1}{\eps'}}^{3D_\gF + D_\gH},
$$
which implies
\begin{multline}
\Ncal_1\rbr{\frac{\epsilon}{8}, \gG, \cbr{Z_i}_{i=1}^N}\\
\le e^4\rbr{D_\gF+1}^3\rbr{D_\gH+1}\rbr{\frac{32\rbr{L + \frac{2+\gamma-\gamma^2}{1-\gamma}C + \rbr{1-\gamma}}eM_1}{\eps}}^{3D_\gF + D_\gH} := C_1\rbr{\frac{1}{\epsilon}}^{D_1},
\end{multline}
where $C_1 = e^4\rbr{D_\gF+1}^3\rbr{D_\gH+1}\rbr{32\rbr{L + \frac{2+\gamma-\gamma^2}{1-\gamma}C + \rbr{1-\gamma}}eM_1}^{D_1}$ and $D_1=3D_\gF + D_\gH$.

Combine this result with~\eqref{eq:intermediate}, we immediately obtain the statistical error, \ie, 
$$
\sP\rbr{ \sup_{\nu\in \gF, \zeta\in \gH}\abr{\Jhat\rbr{\nu, \zeta} - J\rbr{\nu, \zeta}}\ge \eps} \le 8C_1\rbr{\frac{1}{\epsilon}}^{D_1}\exp\rbr{\frac{-N\epsilon^2}{512M_1^2}}.
$$
By setting $\eps = \sqrt\frac{C_2\rbr{\log N + \log\frac{1}{\delta}}}{N}$ with $C_2 = \max\rbr{\rbr{8C_1}^{\frac{2}{D_1}}, 512M_1D_1, 512M_1, 1}$, we have
$$
8C_1\rbr{\frac{1}{\epsilon}}^{D_1}\exp\rbr{\frac{-N\epsilon^2}{512M_1^2}}\le \delta.
$$
\end{proof}

\paragraph{Bounding $\eps_{stat}$.}
As $\hat\nu^*$ is a random variable, we need to bound the following instead:
\begin{eqnarray*}
\lefteqn{\sqrt{\eps_{stat}}} \\
&=& \abr{\hat\E_{s, a, s', a'}\sbr{\rbr{\hat\nu^*\rbr{s, a} -\gamma \hat\nu^*\rbr{s', a'}}r(s, a)} - \E_{s, a, s', a'}\sbr{\rbr{\hat\nu^*\rbr{s, a} -\gamma \hat\nu^*\rbr{s', a'}}r(s, a)}}\\
&\le&\sup_{\nu\in\Fcal} \,\,\abr{\hat\E_{s, a, s', a'}\sbr{\rbr{\nu\rbr{s, a} -\gamma \nu\rbr{s', a'}}r(s, a)} - \E_{s, a, s', a'}\sbr{\rbr{\nu\rbr{s, a} -\gamma \nu\rbr{s', a'}}r(s, a)}},
\end{eqnarray*}
which can be done using a similar argument as above.

\begin{lemma}[Statistical error $\eps_{stat}$]\label{lemma:stat-error-2} 
Under Assumption~\ref{assumption:ref_property}, with at least probability $1-\delta$,
$$ 
\eps_{stat} = \Ocal\rbr{\frac{\log N + \log\frac{1}{\delta}}{N}}. 
$$
\end{lemma}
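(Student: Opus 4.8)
The plan is to reduce $\eps_{stat}$ to a uniform deviation bound over the class $\gF$ and then reuse, almost verbatim, the covering-number machinery developed for $\eps_{est}(\gF)$ in Lemma~\ref{lemma:stat-error-1}. As already noted just above the statement, the random quantity $\hat\nu^*$ cannot be plugged into a pointwise concentration inequality, so the first move is to bound $\sqrt{\eps_{stat}}$ by $\sup_{\nu\in\gF}\abr{\hat\E\sbr{g_\nu} - \E\sbr{g_\nu}}$ with $g_\nu(s,a,s',a',r) := (\nu(s,a) - \gamma\nu(s',a'))\,r$; this is valid because the empirical Bellman backup $\hat\Bcal^{\pi}\hat\nu^*$ evaluated at a transition sample is exactly $\gamma\hat\nu^*(s',a')$ and its conditional expectation given $(s,a)$ is $\Bcal^{\pi}\hat\nu^*(s,a)$, so both the empirical and the population Bellman operators collapse into one joint expectation over $(s,a,s',a')$.

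Next I would verify boundedness: under Assumption~\ref{assumption:ref_property} it suffices to restrict to $\nbr{\nu}_\infty\le \tfrac{C}{1-\gamma}$, and under Assumption~\ref{assumption:mdp_reg} the reward is bounded by $C_r$, so every $g_\nu$ takes values in $[-M_2, M_2]$ with $M_2 := (1+\gamma)\tfrac{C}{1-\gamma}C_r$. I would then apply Pollard's tail inequality (Lemma~\ref{lemma:beta_tail}) to the class $\gG := \cbr{g_\nu : \nu\in\gF}$ with $Z_i := (s_i, a_i, s'_i, a'_i, r_i)$, which reduces the task to controlling $\E\sbr{\Ncal_1(\eps/8, \gG, \cbr{Z_i}_{i=1}^N)}$.

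For the covering number the key estimate is the Lipschitz bound $\tfrac1N\sum_i\abr{g_{\nu_1}(Z_i) - g_{\nu_2}(Z_i)} \le C_r\cdot\tfrac1N\sum_i\abr{\nu_1(s_i,a_i)-\nu_2(s_i,a_i)} + \gamma C_r\cdot\tfrac1N\sum_i\abr{\nu_1(s'_i,a'_i)-\nu_2(s'_i,a'_i)}$, which shows that an $\eps'$-net of $\gF$ on the points $\cbr{(s_i,a_i)}$ together with one on $\cbr{(s'_i,a'_i)}$ produces an $(1+\gamma)C_r\eps'$-net of $\gG$, hence $\Ncal_1((1+\gamma)C_r\eps', \gG, \cbr{Z_i}) \le \Ncal_1(\eps', \gF, \cbr{(s_i,a_i)})\cdot\Ncal_1(\eps', \gF, \cbr{(s'_i,a'_i)})$. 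Applying Haussler's bound (Lemma~\ref{lemma:cover_pseudo}) with the pseudo-dimension $D_\nu$ of $\gF$ (and the range bound $\tfrac{C}{1-\gamma}$) then gives $\Ncal_1(\eps/8, \gG, \cbr{Z_i}) \le C_1'(1/\eps)^{2D_\nu}$ for an explicit constant $C_1'$ depending only on $C, C_r, \gamma, D_\nu$. Substituting into the Pollard bound and choosing $\eps = \sqrt{C_2'(\log N + \log\tfrac1\delta)/N}$ with $C_2'$ of the same flavor as in Lemma~\ref{lemma:stat-error-1} forces the failure probability below $\delta$; so with probability at least $1-\delta$, $\sqrt{\eps_{stat}} = \Ocal\!\rbr{\sqrt{(\log N+\log\tfrac1\delta)/N}}$, and squaring yields the claimed $\Ocal\!\rbr{(\log N + \log\tfrac1\delta)/N}$ rate.

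I expect the only genuinely delicate point to be the initial reduction — correctly arguing that the empirical-vs-population difference involving the data-dependent $\hat\nu^*$ is dominated by the empirical-process supremum over the fixed class $\gF$, and that the two Bellman operators merge into the single joint expectation. Everything downstream is the same bookkeeping as in the proof of Lemma~\ref{lemma:stat-error-1}, the only structural change being that $\gG$ here involves a product of \emph{two} copies of $\gF$ (evaluation at $(s,a)$ and at $(s',a')$) rather than the three-copies-of-$\gF$-times-one-copy-of-$\gH$ product there, which merely replaces the exponent $3D_\gF + D_\gH$ by $2D_\nu$ in the covering-number bound.
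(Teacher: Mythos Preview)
Your proposal is correct and follows essentially the same route as the paper: take the supremum over $\nu\in\gF$ to remove the data-dependence of $\hat\nu^*$, apply Pollard's tail inequality (Lemma~\ref{lemma:beta_tail}) to the class $\gG=\{(\nu-\gamma\nu')\cdot r:\nu\in\gF\}$, bound the empirical $L^1$ covering number of $\gG$ by a product of two covering numbers of $\gF$ via the Lipschitz estimate, invoke Haussler's bound (Lemma~\ref{lemma:cover_pseudo}) to get the exponent $2D_\gF$, and finally choose $\eps\asymp\sqrt{(\log N+\log(1/\delta))/N}$ and square. The only cosmetic differences are that the paper treats the reward in $\eps_{stat}$ as the (bounded) mean reward $r(s,a)$ rather than carrying the random $r_i$ in $Z_i$, and it writes the range constant as $M_2=\tfrac{1+\gamma}{1-\gamma}C^2$ instead of your $(1+\gamma)\tfrac{C}{1-\gamma}C_r$; neither affects the argument.
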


\begin{proof}
We first show that $\forall \nu\in \Hcal$, $ \rbr{\nu\rbr{s, a} - \gamma\nu\rbr{s', a'}}r\rbr{s,a}$ is bounded by $M_2 = \frac{1 + \gamma}{1 - \gamma}C^2$, \ie, 
$$
\nbr{\rbr{\nu - \gamma\nu'}\cdot r}_\infty \le \rbr{1 + \gamma}C\nbr{\nu}_\infty\le \frac{1 + \gamma}{1 - \gamma}C^2.
$$
Then, we apply the lemma~\ref{lemma:beta_tail} with $\Zcal = S\times A \times S\times A$, $Z_i = \rbr{s_i, a_i, s'_i, a'_i}$, and $\Gcal = \rbr{\nu - \gamma \nu}\cdot r$,
\begin{eqnarray}
&&\sP\rbr{\sup_{\nu\in\Fcal}\abr{\hat\E_{Z}\sbr{\rbr{\nu - \hat\Bcal^{\pi}\nu} \cdot r} - \E\sbr{\rbr{\nu - \Bcal^{\pi}\nu} \cdot r}}\ge \eps}\\
&\le& 8\E\sbr{\gN_1\rbr{\frac{\epsilon}{8}, \gG, \cbr{Z_i}_{i=1}^N}}\exp\rbr{\frac{-N\epsilon^2}{512M_2^2}}.
\end{eqnarray}
Similarly, we have 
\begin{eqnarray*}
&&\frac{1}{N}\sum_{i=1}^N\abr{\rbr{\nu_1 - \gamma \nu_1}\cdot r\rbr{Z_i} - \rbr{\nu_2 - \gamma \nu_2}\cdot r \rbr{Z_i}}\\
&\le &\frac{C}{N}\sum_{i=1}^N \abr{\nu_1\rbr{s_i, a_i} - \nu_2\rbr{s_i,a_i}} + \frac{\gamma C}{N}\abr{\nu_1\rbr{s'_i, a'_i} - \nu_2\rbr{s'_i,a'_i}},
\end{eqnarray*}
leading to
\begin{eqnarray}
\Ncal_1\rbr{\rbr{1 + \gamma}C\eps', \gG, \cbr{Z_i}_{i=1}^N}\le \Ncal_1\rbr{\epsilon', \gF, \cbr{s_i, a_i}_{i=1}^N}\Ncal_1\rbr{\epsilon', \gF, \cbr{s'_i, a'_i}_{i=1}^N}.
\end{eqnarray}
Applying lemma~\ref{lemma:cover_pseudo}, we bound the covering number as
\begin{equation}
\Ncal_1\rbr{\rbr{1 + \gamma}C\eps', \gG, \cbr{Z_i}_{i=1}^N}\le e^2\rbr{D_\Fcal + 1}^2\rbr{\frac{2eM_2}{\eps'}}^{2D_\Fcal},
\end{equation}
which implies
$$
\Ncal_1\rbr{\frac{\eps}{8}, \gG, \cbr{Z_i}_{i=1}^N}\le e^2\rbr{D_\Fcal + 1}^2\rbr{\frac{16\rbr{1 + \gamma}CeM_2}{\eps}}^{2D_\Fcal} \defeq C_3\rbr{\frac{1}{\eps}}^{D_2},
$$
with $C_3 \defeq  e^2\rbr{D_\Fcal + 1}^2\rbr{16\rbr{1 + \gamma}CeM_2}^{D_2}$ and $D_2 = 2D_\Fcal$. 

We achieve the statistical error bound, \ie,
\begin{equation}
\sP\rbr{\sqrt{\eps_{stat}} \ge \eps} \le 8C_3\rbr{\frac{1}{\eps}}^{D_2}\exp\rbr{\frac{-N\epsilon^2}{512M_2^2}}.
\end{equation}
By setting $\eps = \sqrt{\frac{C_4\rbr{\log N + \log \frac{1}{\delta}}}{N}}$ with $C_4 = \max\rbr{\rbr{8C_3}^{\frac{2}{D_2}}, 512M_2D_2, 512M_2, 1}$, we have
$$
8C_3\rbr{\frac{1}{\eps}}^{D_2}\exp\rbr{\frac{-N\epsilon^2}{512M_2^2}}\le \delta.
$$
Therefore, we have $\eps_{stat} = \Ocal\rbr{\frac{{\log N + \log \frac{1}{\delta}}}{N}}$, with $1- \delta$ probability.
\end{proof}

\subsection{Putting It All Together}\label{subsection:total-error}

\paragraph{Theorem~\ref{theorem:total-error}}
{\itshape
Under Assumptions~\ref{assumption:ref_property} and~\ref{assumption:mdp_reg}, with $f\rbr{x} = \frac{1}{2}x^2$, 
the mean squared error of DualDICE's estimate is bounded by
\begin{equation*}
\textstyle
\E\sbr{\rbr{\hat\E_{d^\Dcal}\sbr{\hat\zeta\rbr{s, a}\cdot r} - \rho(\pi)}^2}=\widetilde\Ocal\rbr{\eps_{approx}\rbr{\Fcal, \Hcal} + \eps_{opt} + \frac{1}{\sqrt{N}}},
\end{equation*}
where $\E\sbr{\cdot}$ is taken w.r.t. randomness both in the sampling of $\Dset\sim d^\Dset$ and in the algorithm,
$\widetilde\Ocal\rbr{\cdot}$ ignores logarithmic factors, and the error terms are defined in~\eqref{eq:approx-error-defintion}.
}

\begin{proof}
By equations~\ref{eq:dual-error-decompose} and~\ref{eq:zeta-extra-error}, the error can be decomposed as 
\begin{eqnarray}\label{eq:further-decompose}
&&\E\sbr{\rbr{\hat\E_{d^\Dcal}\sbr{\hat\zeta\rbr{s, a}\cdot r} - \E_{d^\Dcal}\sbr{w_{\pi/\Dcal}(s, a) \cdot \overline{R}(s, a)}}^2}\nonumber\\
&\le& 2\E\sbr{\rbr{\hat\E_{d^\Dcal}\sbr{\hat\zeta\rbr{s, a}\cdot r} - \hat\E_{d^\Dcal}\sbr{\rbr{\hat\nu - \hat\Bcal^{\pi}\hat\nu}(s, a) \cdot r}}^2}\nonumber\\
&& + 2\E\sbr{\rbr{\hat\E_{d^\Dcal}\sbr{\rbr{\hat\nu - \hat\Bcal^{\pi}\hat\nu}(s, a) \cdot r} - \E_{d^\Dcal}\sbr{w_{\pi/\Dcal}(s, a) \cdot \overline{R}(s, a)}}^2}\nonumber\\
&\le& 8C_r^2\E\sbr{\rbr{\nbr{\hat\zeta - \hat\zeta^*}^2_{\hat\Dcal} + \nbr{\rbr{\hat\nu^* - \hat\Bcal^{\pi}\hat\nu^*} - \rbr{\hat\nu - \hat\Bcal^{\pi}\hat\nu}}^2_{\hat\Dcal}}} + 8C_r^2\E\sbr{\nbr{\hat \zeta^* - \rbr{\hat\nu^* - \hat\Bcal^{\pi}\hat\nu^*}}^2_{\hat\Dcal}}\nonumber\\
&& + 2\E\sbr{\rbr{\hat\E_{d^\Dcal}\sbr{\rbr{\hat\nu - \hat\Bcal^{\pi}\hat\nu}(s, a) \cdot \hat  r(s, a)} - \E_{d^\Dcal}\sbr{w_{\pi/\Dcal}(s, a) \cdot r(s, a)}}^2}.
\end{eqnarray}

We can bound the last term, $\E\sbr{\rbr{\hat\E_{d^\Dcal}\sbr{\rbr{\hat\nu - \hat\Bcal^{\pi}\hat\nu}(s, a) \cdot \hat  r(s, a)} - \E_{d^\Dcal}\sbr{w_{\pi/\Dcal}(s, a) \cdot r(s, a)}}^2}$, by straightforwardly combining~\eqref{eq:eps-0}, lemma~\ref{lemma:stat-error-1} and lemma~\ref{lemma:stat-error-2} into~\eqref{eq:total-error-decomp}. Specifically, by lemma~\ref{lemma:stat-error-1}, we have
$$
\E\sbr{\eps_{est}\rbr{\Fcal}} = \sqrt\frac{C_2{\log N + \log\frac{1}{\delta_1}}}{N}\rbr{1-\delta_1} + 2\delta_1 M_1 = \Ocal\rbr{\sqrt\frac{\log N }{N}},
$$
by setting $\delta_1 = \frac{1}{\sqrt{N}}$. Similarly, we have
$$
\E\sbr{\eps_{stat}} = \frac{C_4\rbr{\log N + \log \frac{1}{\delta_2}}}{N}\rbr{1 - \delta_2} + 2\delta_2 M_2 = \Ocal\rbr{\frac{\log N}{N}},
$$
where the last equation comes from by setting $\delta_2 = \frac{1}{N}$. Plug these results into~\eqref{eq:total-error-decomp}, we have
\begin{multline}\label{eq:primal-error}
\E\sbr{\rbr{\hat\E_{d^\Dcal}\sbr{\rbr{\hat\nu - \hat\Bcal^{\pi}\hat\nu}(s, a) \cdot \hat  r(s, a)} - \E_{d^\Dcal}\sbr{w_{\pi/\Dcal}(s, a) \cdot r(s, a)}}^2}\\
\le \Ocal\rbr{\eps_{approx}\rbr{\Fcal} + \eps_{approx}\rbr{\Hcal} + \eps_{opt}} + \widetilde\Ocal\rbr{\frac{1}{N} + \sqrt{\frac{1}{N}}},
\end{multline}
where $\eps_{opt} = \E\sbr{\hat\eps_{opt}}$, $\eps_{approx}\rbr{\Fcal}\defeq \sup_{\nu\in S\times A\rightarrow\sR}\inf_{\nu\in\Fcal} \rbr{\nbr{\nu_\Fcal - \nu}_{\Dcal, 1} + \nbr{\nu_\Fcal - \nu}_{\beta\pi, 1}}$, and $\eps_{approx}\rbr{\Hcal}\defeq \sup_{\zeta\in S\times A\rightarrow \sR}\inf_{\zeta\in\Hcal}\rbr{\nbr{\zeta_\Hcal - \zeta}_{\Dcal, 1} + \nbr{\zeta_\Hcal - \zeta}_{\beta\pi, 1}}$, due to the approximation with $\Fcal$ for $\nu$ and $\Hcal$ for $\zeta$, respectively.

The first term in~\eqref{eq:further-decompose}, $\E\sbr{\rbr{\nbr{\hat\zeta - \hat\zeta^*}^2_{\hat\Dcal} + 8\nbr{\rbr{\hat\nu^* - \hat\Bcal^{\pi}\hat\nu^*} - \rbr{\hat\nu - \hat\Bcal^{\pi}\hat\nu}}^2_{\hat\Dcal}}}$, is also the optimization error $\eps_{opt}$. 


The second term, $\E\sbr{\nbr{\hat \zeta^* - \rbr{\hat\nu^* - \hat\Bcal^{\pi}\hat\nu^*}}^2_{\hat\Dcal}}$, is due to the parametrization by $\Fcal$ and $\Hcal$. 

Define the approximation error 
\begin{equation}\label{eq:approx-error-defintion}
\eps_{approx}\rbr{\Fcal, \Hcal} = \eps_{approx}\rbr{\Fcal} + \eps_{approx}\rbr{\Hcal} + \E\sbr{\nbr{\hat \zeta^* - \rbr{\hat\nu^* - \hat\Bcal^{\pi}\hat\nu^*}}^2_{\hat\Dcal}},
\end{equation}
combine~\eqref{eq:primal-error} and the extra errors, we immediately have 
$$
\E\sbr{\rbr{\hat\E_{d^\Dcal}\sbr{\hat\zeta\rbr{s, a}\cdot \hat r\rbr{s, a}} - \E_{d^\Dcal}\sbr{w_{\pi/\Dcal}(s, a) \cdot r(s, a)}}^2}=\widetilde\Ocal\rbr{\eps_{approx}\rbr{\Fcal, \Hcal} + \eps_{opt} + \frac{1}{\sqrt{N}}},
$$
which is the first conclusion.

\end{proof}

\subsection{Optimization Error}\label{subsection:opt-error}
In this section, we characterize the optimization error $\hat\epsilon_{opt}$. With different parametrizations for $\rbr{\gF, \gH}$ and different optimization algorithms for $\Jhat\rbr{\nu, \zeta}$, the convergence rate of $\epsilon_{1}$ will be different. For general parametrization of $\rbr{\gF, \gH}$ as neural network, how to quantitively analyze the optimization error is still an open problem and out of the scope of this paper. We focus on the tabular, linear or kernel parametrization for $\rbr{\gF, \gH}$. 
Let $\rbr{\gF, \gH}$ are the family of linear models with basis function $\psi\rbr{s, a}\in \sR^p$. The tabular and kernel version can be easily generalized by treating $\psi$ as indicator vectors or infinite dimension feature mapping, respectively, and we omit here. Then, we can parametrize $\nu\rbr{s, a} = {w_\nu}^\top{\psi\rbr{s, a}}$ and $\zeta\rbr{s, a} = {w_\zeta}^\top{\psi\rbr{s, a}}$ with $w_\nu, w_\zeta\in \sR^p$. Then, the optimization reduces to
\begin{eqnarray}\label{eq:linear-model}
	\min_{w_\nu\in \Fcal}\max_{w_\zeta\in \Hcal}\,\, \Jhat\rbr{w_\nu, w_\zeta}:= w_\nu^\top {\Acal} w_\zeta -\frac{1}{N} \sum_{i=1}^N f^*\rbr{{w_\zeta}^\top{\psi\rbr{s_i, a_i}}_{\gH}} -  {w_\nu}^\top b, 
\end{eqnarray}
where ${\Acal} = \frac{1}{N}\sum_{i=1}^N \rbr{\psi\rbr{s_i, a_i} - \gamma \psi\rbr{s'_i, a'_i}}\psi^\top\rbr{s_i, a_i}\in \sR^{p\times p}$ and $b = \frac{(1-\gamma)}{N}\sum_{i=1}^N{\psi\rbr{s_0^i, a_0^i}}$. 

We have
\begin{eqnarray}\label{eq:extra-opt-error}
\hat\epsilon_{opt} = \lefteqn{\nbr{\hat\zeta - \hat\zeta^*}^2_{\hat\Dcal} + \nbr{\rbr{\hat\nu^* - \hat\Bcal^{\pi}\hat\nu^*} - \rbr{\hat\nu - \hat\Bcal^{\pi}\hat\nu}}^2_{\hat\Dcal}} \nonumber \\
&\le& \nbr{\Psi}_2^2\nbr{\hat w_\zeta - \hat w_\zeta^*}^2 +  \nbr{\Phi}_2^2\nbr{\hat w_\nu - \hat w_\nu^*}^2 \nonumber \\
&\le& \max\rbr{\nbr{\Psi}_2^2 + \nbr{\Phi}_2^2}\rbr{\nbr{\hat w_\zeta - \hat w_\zeta^*}^2 +  \nbr{\hat w_\nu - \hat w_\nu^*}^2}\,,
\end{eqnarray}
where $\Psi = \sbr{\psi\rbr{s_i, a_i}}_{i=1}^N\in \sR^{N\times p}$ and $\Phi = \sbr{\psi\rbr{s_i, a_i} - \gamma \psi\rbr{s'_i, a'_i}}_{i=1}^N\in \sR^{N\times p}$.

In general case, the optimization~\ref{eq:linear-model} is convex-concave, therefore, the vanilla stochastic gradient descent converges in rate $\Ocal\rbr{\frac{1}{\sqrt{T}}}$ in terms of the primal-dual gap. Specifically, we have $f\rbr{x} = \frac{1}{2}x^2$, which will lead $\frac{1}{N} \sum_{i=1}^N f^*\rbr{{w_\zeta}^\top{\psi\rbr{s_i, a_i}}_{\gH}} = \nbr{w_\zeta}^2_{\Ccal}$ with $\Ccal = \frac{1}{N} \sum_{i=1}^N \psi\rbr{s_i, a_i}\psi^\top\rbr{s_i, a_i}\in\sR^{d\times d}$. Under the assumption as~\cite{du2017stochastic},
\begin{assumption}\label{assumption:full_rank}
$\Acal$ has full rank, $\Ccal$ is strictly positive definite, and the feature vector $\psi\rbr{s, a}$ is uniformly bounded.
\end{assumption}
We discuss the optimization error $\eps_{opt}:=\E\sbr{\eps_1}$, where the $\E\sbr{\cdot}$ w.r.t. the randomness in the algorithm, in two algorithms for~\eqref{eq:linear-model},
\begin{itemize}[noitemsep,topsep=0pt,parsep=0pt,partopsep=0pt]

	\item {\bf SVRG} We can easily verify that the $T$-step solution of SVRG, $\rbr{\hat\nu_T, \hat\zeta_T}$, converges to $\rbr{\hat\nu^*,\hat\zeta^*}$ in linear rate $\Ocal\rbr{\exp\rbr{-T}}$ in terms of $\E\sbr{\nbr{\hat w_\nu^T - \hat w_\nu^*}^2 + \nbr{\hat w_\zeta^T - \hat w_\zeta^*}^2}$ following~\cite{du2017stochastic}, where the expectation w.r.t. the randomness in the SVRG. Specifically, we have
	\begin{equation}\label{eq:opt-error-SVRG}
	\hat\eps_{opt} =\Ocal\rbr{\exp\rbr{-T}}.
	\end{equation}

	\item {\bf SGD} Although the optimization~\eqref{eq:linear-model} is not strongly convex-concave, we can still prove $\Ocal\rbr{\frac{1}{T}}$ convergence rate. 
	\Bo{I personally do not know existing result for this case, so I proved it. If there is some existing result. We can delete this part and just cite the paper.}

	\begin{lemma}\label{lemma:sgd-rate}
	Let the stepsize $\tau_t$ decay in $\Ocal\rbr{\frac{1}{t}}$, assume the norm of the stochastic gradient is bounded, under Assumption~\ref{assumption:full_rank}, we have
	\begin{equation}\label{eq:opt-error-SGD}
	\hat\eps_{opt} =\Ocal\rbr{\frac{1}{T}}.
	\end{equation}
	\end{lemma}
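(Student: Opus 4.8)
\noindent\textbf{Proof proposal for Lemma~\ref{lemma:sgd-rate}.}
The plan is to exploit the special structure of~\eqref{eq:linear-model} when $f(x)=\tfrac12 x^2$: then $f^*(y)=\tfrac12 y^2$, so the problem becomes
\begin{equation*}
\min_{w_\nu\in\gF}\max_{w_\zeta\in\gH}\ \Jhat(w_\nu,w_\zeta)=w_\nu^\top\Acal w_\zeta-\tfrac12 w_\zeta^\top\Ccal w_\zeta-w_\nu^\top b,
\end{equation*}
which is \emph{linear} in the primal variable $w_\nu$ and, since $\Ccal\succ0$ by Assumption~\ref{assumption:full_rank}, \emph{strongly concave} in the dual variable $w_\zeta$. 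First I would write one step of (projected) stochastic descent--ascent as a linear stochastic recursion on the stacked iterate $z_t\defeq(w_\nu^t,w_\zeta^t)\in\sR^{2p}$:
\begin{equation*}
z_{t+1}=\Pi_{\gF\times\gH}\rbr{z_t-\tau_t\rbr{M z_t-c+\xi_t}},\qquad M=\begin{pmatrix}0 & \Acal\\ -\Acal^\top & \Ccal\end{pmatrix},\quad c=\begin{pmatrix}b\\ 0\end{pmatrix},
\end{equation*}
where $\xi_t$ is the zero-mean minibatch noise with $\E\sbr{\nbr{\xi_t}^2\mid\mathcal F_t}\le G^2$ by the bounded-stochastic-gradient hypothesis, and $\Pi_{\gF\times\gH}$ is a nonexpansive projection (inactive if $\gF=\gH=\sR^p$; otherwise the argument is carried out in the $P$-weighted geometry introduced below, in which $\Pi$ is still nonexpansive). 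Since $\Acal$ has full rank, $M$ is invertible, so the saddle point $z^*=(\hat w^*_\nu,\hat w^*_\zeta)$ is the unique solution of $Mz^*=c$, and the error $e_t\defeq z_t-z^*$ satisfies $e_{t+1}=\Pi(z^*+e_t-\tau_t(M e_t+\xi_t))-z^*$, which (interior case) reads $e_{t+1}=e_t-\tau_t(M e_t+\xi_t)$. The goal is to show $\E\sbr{\nbr{e_t}^2}=\Ocal(1/t)$.

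The crux is the structural claim that $M$ is \textbf{Hurwitz}, i.e.\ every eigenvalue of $M$ has strictly positive real part. I would prove this by writing $M=S+N$ with $S=\begin{pmatrix}0 & 0\\ 0 & \Ccal\end{pmatrix}\succeq0$ symmetric and $N=\begin{pmatrix}0 & \Acal\\ -\Acal^\top & 0\end{pmatrix}$ skew-symmetric. For a unit eigenvector, $Mv=\lambda v$, one has $\lambda=v^*Mv=v^*Sv+v^*Nv$; here $v^*Sv$ is real and $v^*Nv$ is purely imaginary, so $\mathrm{Re}(\lambda)=v^*Sv=v_2^*\Ccal v_2\ge 0$. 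If $\mathrm{Re}(\lambda)=0$ then $v_2^*\Ccal v_2=0$ forces $v_2=0$ (as $\Ccal\succ0$); the eigen-equations then give $\Acal^\top v_1=\lambda v_1$ and $\Acal v_2=0=\lambda v_1$, and invertibility of $M$ together with full rank of $\Acal$ force $v_1=0$, contradicting $\nbr v=1$. Hence $\mathrm{Re}(\lambda)>0$ for all eigenvalues. Consequently the Lyapunov equation $M^\top P+PM=I$ has a unique solution $P\succ0$, and (using also that the features, hence $\Acal$ and $\Ccal$, are bounded) there are constants $0<\mu_1\le\mu_2$ and $\mu_3$ with $\mu_1 I\preceq P\preceq\mu_2 I$ and $M^\top PM\preceq\mu_3 I$.

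Next I would run the standard stochastic-approximation argument with the Lyapunov function $V_t\defeq e_t^\top P e_t$. Expanding $V_{t+1}=(e_t-\tau_t M e_t-\tau_t\xi_t)^\top P(e_t-\tau_t M e_t-\tau_t\xi_t)$, using $e_t^\top(PM+M^\top P)e_t=\nbr{e_t}^2$, and taking the conditional expectation (the noise cross-terms vanish because $\E\sbr{\xi_t\mid\mathcal F_t}=0$), one obtains
\begin{equation*}
\E\sbr{V_{t+1}\mid\mathcal F_t}\le V_t-\tau_t\nbr{e_t}^2+\tau_t^2\rbr{\mu_3\nbr{e_t}^2+\mu_2 G^2}\le\rbr{1-\tfrac{\tau_t}{\mu_2}+\tfrac{\mu_3\tau_t^2}{\mu_1}}V_t+\mu_2 G^2\tau_t^2 .
\end{equation*}
With $\tau_t=c_0/t$ and $c_0$ chosen so that $c_0/\mu_2>1$, the classical recursion lemma for diminishing-step stochastic approximation yields $\E\sbr{V_t}=\Ocal(1/t)$, hence $\E\sbr{\nbr{\hat w_\nu^T-\hat w^*_\nu}^2+\nbr{\hat w_\zeta^T-\hat w^*_\zeta}^2}=\Ocal(1/T)$. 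Plugging this into~\eqref{eq:extra-opt-error} and noting that the factor $\max(\nbr{\Psi}_2^2,\nbr{\Phi}_2^2)$ there is a fixed finite constant once the empirical norms $\nbr{\cdot}_{\hat\Dcal}$ are used (again by bounded features), gives $\hat\eps_{opt}=\Ocal(1/T)$, as claimed.

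The main obstacle is the absence of strong monotonicity in the primal direction: $\Jhat$ is merely linear in $w_\nu$, so the naive Euclidean potential $\nbr{e_t}^2$ is not contracted by the iteration, and a direct SGD analysis fails. What unlocks the proof is Assumption~\ref{assumption:full_rank} --- full rank of $\Acal$ together with $\Ccal\succ0$ --- which makes the coupled saddle matrix $M$ Hurwitz, after which the $P$-weighted Lyapunov function (with $P$ from the Lyapunov equation) restores the one-step near-contraction needed for the $\Ocal(1/t)$ rate. The remaining routine care is in controlling the projection step and the noise variance, and in tracking the dependence of the constants on $\mu_1,\mu_2,\mu_3,G$ and on $\lambda_{\min}(\Ccal),\ \sigma_{\min}(\Acal)$.
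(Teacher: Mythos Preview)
Your proof is correct and reaches the same $\Ocal(1/T)$ rate, but by a genuinely different route than the paper.

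\textbf{What the paper does.} The paper rescales the dual variable by a carefully chosen constant $\rho=\tfrac{8\lambda_{\max}(\Acal\Ccal^{-1}\Acal)}{\lambda_{\min}(\Ccal)}$ (following Du and Hu, 2017), works with the stacked iterate $\hat\theta_t=(w_\nu^t,\tfrac{1}{\sqrt{\rho}}w_\zeta^t)$ and the ordinary Euclidean norm, and cites \cite{du2017stochastic} for the fact that the resulting coupling matrix $Q=\begin{pmatrix}0&\sqrt{\rho}\Acal\\-\sqrt{\rho}\Acal^\top&\rho\Ccal\end{pmatrix}$ has all eigenvalues with positive real part and $\lambda_{\min}(Q)\ge\tfrac{8}{9}\lambda_{\min}(\Acal\Ccal^{-1}\Acal)$. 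The one-step contraction in $\|\hat\theta_t-\hat\theta^*\|^2$ then follows directly and yields the recursion $\Delta_{t+1}\le(1-2c\sigma_t)\Delta_t+\tfrac{1+1/\rho}{2}\sigma_t^2K^2$.

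\textbf{What you do.} You skip the rescaling entirely, show from the block structure that the unscaled $M=\begin{pmatrix}0&\Acal\\-\Acal^\top&\Ccal\end{pmatrix}$ is Hurwitz, and then let the Lyapunov equation $M^\top P+PM=I$ manufacture the right quadratic form for you. This is the classical stochastic-approximation argument and is fully self-contained (no external eigenvalue bound needed). The price is that your constants $\mu_1,\mu_2,\mu_3$ are defined implicitly through $P$, whereas the paper's route, via the explicit $\rho$, gives the contraction rate directly in terms of $\lambda_{\min}(\Ccal)$ and the singular values of $\Acal$.

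\textbf{Two small cleanups.} (i) In your Hurwitz argument, once $v_2=0$ the block equations read $\Acal v_2=0=\lambda v_1$ and $-\Acal^\top v_1=0$, not $\Acal^\top v_1=\lambda v_1$; the conclusion (either $\lambda\ne0\Rightarrow v_1=0$, or $\lambda=0\Rightarrow v_1\in\ker\Acal^\top=\{0\}$) is unchanged. (ii) Your remark that the projection $\Pi$ is nonexpansive ``in the $P$-weighted geometry'' is only true if the projection itself is taken in the $P$-norm; with the standard Euclidean projection onto $\gF\times\gH$ the $P$-norm contraction need not survive. The paper does not use projections, so the cleanest fix is to drop $\Pi$ and rely on bounded features plus bounded stochastic gradients to keep the iterates bounded, which is all the recursion needs.
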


	\begin{proof}
	Denote $\hat\theta_t = \sbr{\hat w^t_\nu, \frac{1}{\sqrt{\rho}}\hat w^t_\theta}$ and $$G_t =\begin{bmatrix} 1 & 0\\ 0 &\frac{1}{\sqrt{\rho}} \end{bmatrix}\cdot \underbrace{\begin{bmatrix} 0 & \sqrt{\rho}\hat\Acal_t \\ - \sqrt{\rho}\hat\Acal^\top_t & \rho\hat\Ccal_t \end{bmatrix}}_{\Qhat} \cdot \begin{bmatrix} w_\nu^t \\ \frac{1}{\sqrt{\rho}}w_\zeta^t \end{bmatrix} - \begin{bmatrix} \hat b_t\\0 \end{bmatrix}$$ as the unbiased stochastic gradient with $\E\sbr{G_t} = g_t$, we have the update rule as
	$	\theta_{t+1} = \theta_t - \Sigma_t G_t, \quad \Sigma_t = \begin{bmatrix} 1 & 0 \\ 0 &\frac{1}{\sqrt{\rho}}\end{bmatrix}\sigma_t
	$
	We denote $\delta_t = \frac{1}{2}\nbr{\hat\theta_t - \hat\theta^*}^2 = \frac{1}{2}\sbr{\nbr{\hat w_\nu^t - \hat w_\nu^*}^2 + \frac{1}{\rho}\nbr{\hat w_\zeta^t - \hat w_\zeta^*}^2}$ and $\Delta_t = \E\sbr{\delta_t}$. Then, we have
	\begin{eqnarray*}
	\delta_{t+1} &=& \frac{1}{2}\nbr{\hat\theta_t - \Sigma_t G_t - \hat\theta^*}^2\le \delta_t + \frac{1}{2}\sigma_t^2\rbr{1 + \frac{1}{\rho}}\nbr{G_t}^2 - \rbr{\hat\theta_t - \hat\theta^*}^\top \rbr{\Sigma_tG_t}.
	\end{eqnarray*}
	Take the expectation on both sides and $\E\sbr{\nbr{G_t}^2}\le K^2$,
	\begin{eqnarray}\label{eq:intermedia}
	\Delta_{t+1} = \Delta_{t} + \frac{\sigma_t^2}{2}\rbr{1 + \frac{1}{\rho}}K^2 -\E\sbr{\rbr{\hat\theta_t - \hat\theta^*}^\top \rbr{\Sigma_t g_t}}.
	\end{eqnarray}
	As shown in~\cite{du2017stochastic}, under Assumption~\ref{assumption:full_rank} and set $\rho = \frac{8\lambda_{\max}\rbr{\Acal\Ccal^{-1}\Acal}}{\lambda_{\min}\rbr{\Ccal}}$, 	the matrix $Q := \E\sbr{\hat Q}$ has positive real eigenvalue and
	\begin{eqnarray*}
	\lambda_{\max}\rbr{Q} \le 9\frac{\lambda_{\max}\rbr{\Ccal}}{\lambda_{\min}\rbr{\Ccal}} \lambda_{\max}\rbr{\Acal\Ccal^{-1}\Acal},\quad \lambda_{\min}\rbr{Q} \ge\frac{8}{9}\lambda_{\min}\rbr{\Acal\Ccal^{-1}\Acal}.
	\end{eqnarray*}
	On the other hand, with the first-order optimality condition, we can show that $Q\hat\theta^* = \hat b$. Then, we have 
	\begin{eqnarray*}
	\lefteqn{\E\sbr{\rbr{\hat\theta_t - \hat\theta^*}^\top \rbr{\Sigma_t g_t}} = \E\sbr{\rbr{\hat\theta_t - \hat\theta^*}^\top {\Sigma^2_t}\rbr{Q\hat\theta_t - \hat b}}} \\
	& =& \E\sbr{\rbr{\hat\theta_t - \hat\theta^*}^\top {\Sigma^2_t Q}\rbr{\hat\theta_t - \hat \theta^*}}\ge 2\lambda_{\min}\rbr{Q}\rbr{1 + \frac{1}{\rho}}\sigma_t^2 \Delta_t.
	\end{eqnarray*}
	Plug this into the~\eqref{eq:intermediate}, we obtain the recursion, 
	\begin{equation}\label{eq:recursion}
	\Delta_{t+1} \le \Delta_{t} + \frac{\sigma_t^2}{2}\rbr{1 + \frac{1}{\rho}}K^2 - 2\lambda_{\min}\rbr{Q}\rbr{1 + \frac{1}{\rho}}\sigma_t^2 \Delta_t \le \rbr{1 - 2c\sigma_t}\Delta_t + \frac{1 + \frac{1}{\rho}}{2}\sigma_t^2K^2,
	\end{equation}
	with $c = \lambda_{\min}\rbr{Q}\rbr{1 + \frac{1}{\rho}}$. By setting $\sigma_t > \frac{1}{2ct}$, $\Delta_T = \Ocal\rbr{\frac{1}{T}}$.

	\end{proof}

\end{itemize}

Using the above results for linear parametrization, we can reach the following corollary of Theorem~\ref{theorem:total-error}.
\begin{corollary}
Under the conditions of Theorem~\ref{theorem:total-error} and with linear parametrization of $\rbr{\nu, \zeta}$ and under Assumption~\ref{assumption:full_rank}, after $T$-iteration, we have
$\hat\eps_{opt} =\Ocal\rbr{\exp\rbr{-T}}$ for SVRG and $\hat\eps_{opt} = \Ocal\rbr{\frac{1}{T}}$ for SGD.
\end{corollary}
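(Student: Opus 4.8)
The plan is to obtain the corollary by combining the per-iteration bound~\eqref{eq:extra-opt-error} with convergence guarantees for SVRG and SGD on the linear saddle-point problem~\eqref{eq:linear-model}. Recall that under the linear parametrization $\nu(s,a)=w_\nu^\top\psi(s,a)$, $\zeta(s,a)=w_\zeta^\top\psi(s,a)$ and $f(x)=\tfrac12 x^2$, the empirical problem~\eqref{eq:linear-model} is convex--concave, with a bilinear coupling term $w_\nu^\top\Acal w_\zeta$, a quadratic dual term $\nbr{w_\zeta}^2_{\Ccal}$ arising from $f^*(\zeta)=\tfrac12\zeta^2$, and a linear primal term $w_\nu^\top b$ --- precisely the structure analyzed in~\cite{du2017stochastic}. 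First I would use~\eqref{eq:extra-opt-error}, $\hat\eps_{opt}\le\max\rbr{\nbr{\Psi}_2^2+\nbr{\Phi}_2^2}\rbr{\nbr{\hat w_\nu-\hat w_\nu^*}^2+\nbr{\hat w_\zeta-\hat w_\zeta^*}^2}$, together with the fact that under Assumption~\ref{assumption:full_rank} the features $\psi$ are uniformly bounded, so $\nbr{\Psi}_2$ and $\nbr{\Phi}_2$ are finite constants; hence it suffices to control the expected squared parameter error $\E\sbr{\nbr{\hat w_\nu^T-\hat w_\nu^*}^2+\nbr{\hat w_\zeta^T-\hat w_\zeta^*}^2}$ produced by $OPT$ after $T$ iterations.

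For $OPT=\text{SVRG}$, I would invoke the analysis of~\cite{du2017stochastic}: Assumption~\ref{assumption:full_rank} ($\Acal$ full rank, $\Ccal$ strictly positive definite, bounded features) guarantees that~\eqref{eq:linear-model} has a unique saddle point $(\hat w_\nu^*,\hat w_\zeta^*)$, and after the change of variables $\theta=\rbr{w_\nu,\tfrac{1}{\sqrt\rho}w_\zeta}$ the stochastic variance-reduced updates contract geometrically, so $\E\sbr{\nbr{\hat w_\nu^T-\hat w_\nu^*}^2+\nbr{\hat w_\zeta^T-\hat w_\zeta^*}^2}=\Ocal\rbr{\exp\rbr{-T}}$. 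Substituting into the display above gives $\hat\eps_{opt}=\Ocal\rbr{\exp\rbr{-T}}$, which is~\eqref{eq:opt-error-SVRG}.

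For $OPT=\text{SGD}$, I would appeal to Lemma~\ref{lemma:sgd-rate}: with the rescaled stochastic gradient $G_t$ satisfying $\E\sbr{G_t}=g_t$ and $Q=\E\sbr{\hat Q}$, the choice $\rho=8\lambda_{\max}\rbr{\Acal\Ccal^{-1}\Acal}/\lambda_{\min}\rbr{\Ccal}$ makes $Q$ have strictly positive real eigenvalues with $\lambda_{\min}(Q)\ge\tfrac89\lambda_{\min}\rbr{\Acal\Ccal^{-1}\Acal}$, so $\E\sbr{\rbr{\theta_t-\theta^*}^\top\Sigma_t g_t}\ge 2\lambda_{\min}(Q)\rbr{1+\tfrac1\rho}\sigma_t^2\,\Delta_t$; plugging this into the one-step expansion $\Delta_{t+1}\le\Delta_t+\tfrac{\sigma_t^2}{2}\rbr{1+\tfrac1\rho}K^2-\E\sbr{\rbr{\theta_t-\theta^*}^\top\Sigma_t g_t}$ yields the recursion $\Delta_{t+1}\le\rbr{1-2c\sigma_t}\Delta_t+\tfrac{1+1/\rho}{2}\sigma_t^2 K^2$, which with step size $\sigma_t=\Theta(1/t)$ solves to $\Delta_T=\Ocal(1/T)$, and then~\eqref{eq:extra-opt-error} gives $\hat\eps_{opt}=\Ocal(1/T)$, which is~\eqref{eq:opt-error-SGD}. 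Finally, I would note that inserting these two rates into Theorem~\ref{theorem:total-error} specializes the MSE bound to $\widetilde\Ocal\rbr{\eps_{approx}\rbr{\Fcal,\Hcal}+\exp\rbr{-T}+N^{-1/2}}$ for SVRG and $\widetilde\Ocal\rbr{\eps_{approx}\rbr{\Fcal,\Hcal}+T^{-1}+N^{-1/2}}$ for SGD.

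The main obstacle is the SGD case, since~\eqref{eq:linear-model} is merely convex--concave rather than strongly-convex--strongly-concave, so a direct last-iterate $\Ocal(1/T)$ argument is unavailable. The resolution, borrowed from the reformulation in~\cite{du2017stochastic}, is that the expected gradient map, although non-symmetric, has spectrum in the open right half-plane after the $\rho$-dependent rescaling, which supplies the ``effective'' strong monotonicity behind the contraction factor $(1-2c\sigma_t)$ used above. The remaining pieces --- bounding $\nbr{\Psi}_2$ and $\nbr{\Phi}_2$ via bounded features, checking that $f^*(\zeta)=\tfrac12\zeta^2$ gives~\eqref{eq:linear-model} exactly the bilinear-plus-quadratic form required by the cited analyses, and tracking the constant $\rho$ --- are routine bookkeeping.
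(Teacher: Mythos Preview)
Your proposal is correct and follows essentially the same approach as the paper: reduce $\hat\eps_{opt}$ to the squared parameter error via~\eqref{eq:extra-opt-error}, then invoke the linear-convergence result of~\cite{du2017stochastic} for SVRG and Lemma~\ref{lemma:sgd-rate} (whose proof you accurately recapitulate, including the $\rho$-rescaling that pushes the spectrum of $Q$ into the right half-plane) for SGD. The paper does not give a separate proof of the corollary beyond this very argument in Section~\ref{subsection:opt-error}, so your write-up is, if anything, more explicit than the original.
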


\end{document}